\def\N{\mbox{$\mathbb{N}$}}
\def\inf{\mbox{$\preceq$}}
\def\vide{\mbox{$\Box$}} 
\def\card{\#}
\def\gap{\mbox{$gap[M,N]$}\xspace}
\def\MNgap{[M, N]\xspace}
\def\ginf{\inf^{\MNgap}\xspace}
\def\gapCover{cover_{\SDB}^{\MNgap}\xspace}
\def\gapSup{sup_{\SDB}^{\MNgap}\xspace}
\newcommand{\gapMN}[2]{\mbox{$gap[#1,#2]$}}
\newcommand{\gapinf}[2]{\ensuremath{#1\ginf #2}\xspace}
\newcommand{\gapMNinf}[3]{\ensuremath{#1\inf^{#3} #2}\xspace}
\newcommand{\RE}[2]{\ensuremath{Ext^{[M,N]}_R(#1,#2)}\xspace}
\newcommand{\REG}[3]{\ensuremath{Ext^{#3}_R(#1,#2)}\xspace}
\newcommand{\RF}[2]{\ensuremath{\mathcal{RF}^{[M,N]}(#1,#2)}\xspace}
\newcommand{\cspadeSG}[2]{\ensuremath{\tt cSpade[#1,#2]}\xspace}
\newcommand{\angx}[1]{{{\mbox{$\langle #1 \rangle$}}}}
\newcommand{\SDB}{SDB\xspace}
\newcommand{\I}{\mathcal{I}} 	
\newcommand{\Li}{\mathcal{L_I}}
\def\cspade{\texttt{cSpade}\xspace}
\def\cpsm{\texttt{global-p.f}\xspace}
\def\cps{\texttt{decomposed-p.f}\xspace}
\def\gapcp{\texttt{GAP-SEQ}\xspace}
\def\pp{\textsc{Prefix-Projection}\xspace}
\def\gapCP{\textsc{GAP-SEQ}\xspace}
\newcommand{\gapPP}[2]{\ensuremath{\gapcp[#1,#2]}\xspace}
\begin{document}
\title{A global Constraint for mining Sequential Patterns with GAP constraint}

\author{Amina Kemmar$^{1}$ \and Samir
  Loudni$^2$ \and Yahia Lebbah$^{1}$\\ Patrice Boizumault$^2$  \and Thierry
  Charnois$^3$} 
\institute{
$^1$LITIO -- University of Oran 1 -- Algeria, EPSECG of Oran -- Algeria\\ 
$^2$ GREYC (CNRS UMR 6072) -- University of Caen -- France\\ 
$^3$LIPN (CNRS UMR 7030) -- University PARIS 13 -- France \\
}

\maketitle

\begin{abstract}
Sequential pattern mining (SPM) under gap constraint is a challenging task.
Many efficient specialized methods have been developed but they are all suffering from a lack of genericity.
The Constraint Programming (CP) approaches are not so effective because of the size of their encodings.
In~\cite{DBLP:conf/cp/KemmarLLBC15}, we have proposed the global constraint \pp for SPM which remedies to this drawback.
However, this global constraint cannot be directly extended to support gap constraint.
In this paper, we propose the global constraint \gapCP enabling to handle SPM with or without gap constraint.
\gapCP relies on the principle of right pattern extensions.
Experiments show that our approach clearly outperforms both CP approaches and the state-of-the-art \cspade method on large datasets.
\end{abstract}
 
\section{Introduction}
\label{section:introduction}
Mining sequential patterns (SPM) is an important task in data mining. 
There are many useful applications, including discovering
changes in customer behaviors, detecting intrusion from
web logs and finding relevant genes from DNA sequences. 
In recent years many studies have focused on SPM with gap
constraints~\cite{DBLP:conf/cikm/Zaki00,DBLP:journals/tkdd/ZhangKCY07}. 
Limited gaps allow a mining process to bear a certain degree of
flexibility among correlated pattern elements in the original sequences. 
For example, \cite{DBLP:conf/icdm/JiBD05} analyses purchase behaviors to reflect products usually bought by customers at regular time intervals according to time gaps. 
In computational biology, the gap constraint helps discover periodic
 patterns with significant biological and 
 medical values~\cite{DBLP:journals/cbm/WuZHA13}. 

Mining sequential patterns under gap constraint (GSPM) is a challenging
task, since the {\it apriori property} does not hold for this problem:
{\it a subsequence of a frequent sequence is not necessarily frequent}. 
Several specialized approaches have been
proposed~\cite{DBLP:conf/icdm/JiBD05,Li2012,DBLP:journals/tkdd/ZhangKCY07}  
but they have a lack of genericity to handle simultaneously various types of constraints. 
Recently, a few proposals~\cite{DBLP:conf/ecai/CoqueryJSS12,DBLP:conf/ictai/KemmarULCLBC14,metivierLML13,NegrevergneCPIAOR15}
have investigated relationships between GSPM and constraint programming (CP) 
in order to provide a declarative approach, while exploiting efficient and generic solving methods. 
%
But, due to the size of the proposed encodings, these CP methods are not as efficient as specialized ones. 
%
More recently, we have proposed the global constraint \pp for SPM which remedies to this drawback~\cite{DBLP:conf/cp/KemmarLLBC15}. 
However, as this global constraint uses the projected databases principle, it cannot be directly extended to support gap constraint. 

In this paper, we introduce the global constraint \gapCP enabling to handle SPM with or without gap constraint.
%
\gapCP relies on the principle of right pattern extension and its filtering exploits the prefix anti-monotonicity property of the gap constraint
to provide an efficient pruning of the search space.
\gapCP enables to handle simultaneously different types of constraints 
and its encoding does not require any reified constraints nor any extra variables.
Finally, experiments show that our approach clearly outperforms CP
approaches as well as specialized methods for GSPM and achieves scalability while it is a major issue for CP approaches.

The paper is organized as follows.
Section~\ref{section:preliminaries}
introduces the prefix anti-monotonicity of the gap constraint as well as right pattern extensions
that will enable an efficient filtering.
Section~\ref{section:related-works} provides a critical review of specialized
methods and CP approaches for sequential pattern mining under gap constraint. 
Section~\ref{sec:gap} presents the global constraint \gapCP.
Section~\ref{section:experimentations} reports experiments we
performed. Finally, we conclude and draw some perspectives.

\vspace*{-.2cm}
\section{Preliminaries}
\label{section:preliminaries}
\setlength{\textfloatsep}{0pt}
\newcommand{\yl}[2][Yahia]
{\fboxsep=1pt\fbox{\tiny\color{blue} #1}{{\color{red} #2}}}

\vspace*{-.2cm} 
First, we provide the basic definitions for GSPM.
Then, we show that the anti-monotonicity property of frequency of SPM does not hold for GSPM.
Finally, we introduce right pattern extensions that will enable an efficient filtering for GSPM.

\vspace*{-.2cm}
\subsection{Definitions}
\label{sec-2.1}
Let $\I$ be a finite set of distinct {\em items}. 
The language of sequences  corresponds to $\Li=\I^n$ where $n\in\N^+$. 

\begin{definition}[sequence, sequence database]
A sequence $s$ over $\Li$ is an ordered list
$\langle s_1 s_2 \ldots s_n \rangle$, where $s_i$, $1 \leq i \leq n$,
is an item. $n$ is called the length of the sequence $s$.
A sequence database $\SDB$ is a set of tuples $(sid, s)$,
where $sid$ is a sequence identifier and $s$ a sequence denoted by $\SDB[sid]$. 
\end{definition}

\vspace*{-.2cm}
We now define the subsequence relation $\ginf$ under \gap constraint
which restricts the allowed 
distance between items of subsequences in sequences. 

\begin{definition}[subsequence relation $\ginf$ under \gap]
$\alpha = \langle \alpha_1 \ldots \alpha_m \rangle$ is a 
  subsequence of $s = \langle s_1 \ldots s_n \rangle$, under \gap,
denoted by $(\gapinf{\alpha}{s})$, if $m \leq n$ and, for all $1 \leq
i \leq m$, there exist integers  
$1 \leq j_1 \leq \ldots \leq j_m\leq n$, such that 
$\alpha_i=s_{j_i}$, and $\forall k\in\{1, ..., m-1\}, M\leq j_{k+1} -
j_{k} - 1 \leq N$.  
In this context, the pair $(s,[j_1,j_m])$ denotes an {\bf occurrence}
of $\alpha$ in $s$, where $j_1$ and $j_m$ represent the positions of
the first and last items of $\alpha$ in $s$.
We say that $\alpha$ is contained in $s$ or $s$ is a super-sequence of 
$\alpha$  under \gap. We also say that $\alpha$ is a \gap~{\bf
  constrained pattern} in $s$. 
\begin{itemize}
\vspace*{-.2cm}
\item 
Let $AllOcc(\alpha, s) = \{[j_1,j_m] \,|\, (s,[j_1,j_m])$ is an occurrence of $\alpha$ in $s\}$
be the set of all the occurrences of some sequence $\alpha$ under \gap{} in $s$.
\item  
Let $AllOcc(\alpha,\SDB) =\{(sid,AllOcc(\alpha,\SDB[sid])) \,|\, (sid, \SDB[sid]) \in \SDB \} $
be the set of all the occurrences of some sequence $\alpha$ under
  \gap{} in \SDB. 
\item 
Let $gap[M, \infty]$ and $gap[0, N]$ the {\bf minimum and the maximum gap} constraints respectively.  
\item 
The relation $\inf$ stands for $\inf^{[0,\infty]}$ where the gap  constraint is inactive. 
\end{itemize}
\end{definition}

For example, the sequence $\angx{BABC}$ is a super-sequence of
$\angx{AC}$ under \gapMN{0}{2}: $\gapMNinf{\angx{AC}}{\angx{BABC}}{[0,
  2]}$.

\begin{definition}[prefix, postfix]  
Let $\beta=\angx{\beta_1 \ldots \beta_n}$ be a sequence. The sequence $\alpha = \angx{\alpha_1
  \ldots \alpha_m}$ where $m \leq n$ is called the prefix of $\beta$ iff $\forall i \in
[1..m], \alpha_i=\beta_i$. The sequence $\gamma =\angx{\beta_{m+1}
  \ldots \beta_n}$ is called the postfix of $s$ w.r.t. $\alpha$. With
the standard concatenation operator "$concat$", 
we have $\beta = concat(\alpha, \gamma)$.  
\end{definition}

The cover of a sequence $\alpha$ in $\SDB$ is the set of all tuples in
$\SDB$ in which $\alpha$ is contained. 
The support of a sequence $\alpha$ in $\SDB$  is the cardinal of its
cover. 

\begin{definition}[coverage and support under \gap]
Let $\SDB$ be a sequence database and $\alpha$ a sequence. 
$\gapCover(\alpha)$$=$$\{(sid, s) \in \SDB \, | \, \alpha \,\inf^{\MNgap}\, s\}$ and
\\$\gapSup(\alpha) = \card \gapCover(\alpha)$. 
\end{definition}

\begin{definition}[\gap constrained sequential pattern mining (GSPM)]
Given a sequence database $\SDB$, a minimum support threshold
$minsup$ and a gap constraint \gap. The problem of \gap
constrained sequential pattern mining is to find all subsequences
$\alpha$ such that $\gapSup(\alpha) \geq minsup$. 

\end{definition}

\begin{table}[t]
	\begin{center}
		\scalebox{.85}{
			\begin{tabular}{|l|l|}	
				\hline
                                ~sid~ &~Sequence~ \\
				\hline
				 $1$ & $\angx{ABCDB}$  \\
				 $2$ & $ \angx{ACCBACB}$ \\
				 $3$ & $ \angx{ADCBEEC} $ \\ 
				 $4$  & $\angx{AACC}$ \\
				\hline
			\end{tabular} 
		}  
                        \caption{A sequence database example $\SDB_1$.}
		\label{tab:SDB}
	\end{center}
\end{table}	

\vspace*{-.4cm} 
\begin{example}
\label{example:1}
Table~\ref{tab:SDB} represents
  a sequence database of four sequences 
where the set of items is $ \I = \{A, B, C, D, E\}$. Let 
the sequence $\alpha= \angx{A C}$. The occurrences under \gapMN{0}{1}
of $\alpha$ in $\SDB_1[2]$ is given by   
$AllOcc(\alpha,\SDB_1[2]) = \{[1,2]), [1,3], [5,6]\}$. 
We have $cover_{\SDB_{1}}^{[0,1]} (\alpha)=\{(1,
s_1), (2, s_2), (3, s_3),(4, s_4)\}$. If we consider $minsup=2$,
$\alpha$ is a \gapMN{0}{1} constrained sequential pattern because
$sup_{\SDB_1}^{[0,1]} (\alpha) \ge 2$.   
\end{example}


\subsection{Prefix anti-monotonicity of \gap}
\label{sec-2.1-pb}
Most of
SPM algorithms rely on the {\it anti-monotonicity property of frequency}
\cite{DBLP:conf/icde/AgrawalS95} to reduce the search space: all the
subsequences of a frequent sequence are frequent as well (or,
equivalently, if a subsequence is infrequent, then no super-sequence of
it can be frequent). However, this property does not hold for the gap
constraint, and more precisely for the maximum gap constraint.   
A simple illustration from our running example suffices to show that
sequence $\angx{A B}$ is not a sequential pattern under \gapMN{0}{1}
(for  $minsup = 3$)  whereas sequence $\angx{A C B}$ is a
\gapMN{0}{1} constrained sequential pattern. 
As a consequence, one needs to use other techniques for pruning the search
space. 
The following proposition
shows how the
{\it prefix anti-monotonicity property} can be exploited to ensure the 
anti-monotonicity of the gap constraint. 

\begin{definition}[prefix anti-monotone property]
A constraint $c$ is called prefix anti-monotone
if for every sequence $\alpha$ satisfying $c$, every prefix
 of $\alpha$ also satisfies the constraint.   
\end{definition}

\begin{proposition}
\label{prefix-gap-property}
\gap is prefix anti-monotone.
\end{proposition}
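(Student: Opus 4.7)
The plan is to unfold the definitions and show that any occurrence of $\alpha$ in a database sequence $s$ under $gap[M,N]$ automatically yields an occurrence of each prefix of $\alpha$, which gives the containment $\gapCover(\beta) \supseteq \gapCover(\alpha)$ for every prefix $\beta$ of $\alpha$, and hence the prefix anti-monotonicity of the gap constraint (viewed here through the frequency threshold).

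More precisely, I would fix a sequence $\alpha = \langle \alpha_1 \ldots \alpha_m \rangle$ that is a \gap{} constrained sequential pattern, a prefix $\beta = \langle \alpha_1 \ldots \alpha_k \rangle$ with $k \leq m$, and a tuple $(sid, s) \in \gapCover(\alpha)$. By definition of $\gapinf{}{}$, there exist integers $1 \leq j_1 \leq \ldots \leq j_m \leq |s|$ with $\alpha_i = s_{j_i}$ for all $i$ and $M \leq j_{r+1} - j_{r} - 1 \leq N$ for all $r \in \{1,\ldots,m-1\}$. The first key observation is that the truncated list $j_1 \leq \ldots \leq j_k$ still witnesses an occurrence of $\beta$ in $s$: the equalities $\beta_i = \alpha_i = s_{j_i}$ hold for $i \leq k$, and the gap inequalities $M \leq j_{r+1} - j_r - 1 \leq N$ for $r \in \{1,\ldots,k-1\}$ are a sub-collection of those already satisfied by the $j_r$'s for $\alpha$. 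Hence $\gapinf{\beta}{s}$, so $(sid,s) \in \gapCover(\beta)$.

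This yields $\gapCover(\alpha) \subseteq \gapCover(\beta)$ and therefore $\gapSup(\beta) \geq \gapSup(\alpha) \geq minsup$, so $\beta$ is itself a \gap{} constrained sequential pattern. Iterating (or directly applying this for every $k$) establishes the property for every prefix of $\alpha$, which is exactly the prefix anti-monotonicity claim of the proposition.

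I do not expect any serious obstacle here: the only subtle point is to keep the roles of the two monotonicities straight, namely to recognize that restricting an occurrence to a prefix preserves both the positional matching and all remaining gap inequalities (because these inequalities are indexed pairwise and the truncation only drops constraints, it never introduces new ones). Once this is noted, the proof is a one-step verification, and the statement should be presented as a direct consequence of the definition of $\gapinf{}{}$ rather than as a computation.
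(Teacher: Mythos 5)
Your proof is correct and follows essentially the same route as the paper's: the key step in both is that truncating the witnessing positions $j_1 \leq \ldots \leq j_m$ of an occurrence of $\alpha$ only drops gap inequalities, so every prefix of $\alpha$ inherits an occurrence under $gap[M,N]$. The paper stops at this relation-level observation, while you additionally spell out the resulting cover containment and support inequality, which is just a slightly more explicit packaging of the same idea.
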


\begin{proof}
Let $\alpha=\angx{\alpha_1\dots \alpha_m}$ and $s=\angx{s_1 \dots s_n}$ be two
sequences s.t. $\gapinf{\alpha}{s}$ and $m \leq n$. By definition, there exist
integers $1 \leq j_1 \leq \ldots \leq j_m\leq n$, such that
$\alpha_i=s_{j_i}$, and $\forall k\in\{1, ..., m-1\}, M\leq j_{k+1} -
j_{k} - 1 \leq N$. As a consequence, the property also holds for
every prefix of $\alpha$.
$\Box$ 
\end{proof}

Hence, if a sequence $\alpha$ does not satisfy \gap, then all
sequences that have $\alpha$ as prefix will not satisfy this
constraint. Sect.~\ref{consistency} shows how this property
can be exploited to provide an efficient filtering.

\vspace*{-.2cm} 
\subsection{Right pattern extensions}
\label{right:pattern:extension}
Right pattern
extensions of some pattern $p$ gives all the possible subsequences which
can be appended at right of $p$ to form a \gap constrained pattern. 
According to proposition ~\ref{prefix-gap-property}, 
the set of all items  locally frequent within the right pattern
extensions of $p$ in \SDB can be used  to extend $p$.
In the following, we introduce an operator allowing to compute all the
right pattern extensions of a pattern w.r.t. \gap. 

\begin{definition}[Right pattern extensions]
\label{def:left:right}
Given some sequence $(sid, s)$ and a pattern $p$ 
s.t. $\gapinf{p}{s}$. The {\bf right pattern extensions} of $p$ in 
$s$, denoted by $\RE{p}{s}$, is the collection of legal subsequences of $s$ 
located at the right of $p$ and satisfying \gap. 
To define $\RE{p}{s}$, we need to define $BE^{[M, N]}(p, s)$ {\bf basic right extensions} :
\begin{footnotesize}
\[BE^{[M, N]}(p, s) = \bigcup_{[j_1,j_m]\in AllOcc(p, s)}
\{(j_m, \mathtt{SubSeq}(s,j_m+M+1,
min(j_m+N+1,\#s)))\}\] 

\end{footnotesize}
\[\mbox{ where }\mathtt{SubSeq}(s, i_1, i_2) = \left\{
\begin{array}{ll}
\angx{s[i_1], ..., s[i_2]} & \mbox{ if } i_1\leq i_2\leq \#s\\
\angx{} & \mbox{ otherwise }
\end{array}\right.
\] 
 
\noindent
Right pattern extensions $\RE{p}{s}$ is defined as follows:  




\begin{footnotesize}
\begin{equation}\label{def-ExtR}
\RE{p}{s} = \left\{
\begin{array}{lr}
 \{Sb\, | \,(j'_{m}, Sb) \in BE^{[M,N]}(p, s) \wedge & \mbox{ if } N \geq \#s \\ 
 \hspace*{0.8cm} j'_{m} = \min_{(j_m, Sb)\in BE^{[M, N]}(p,s)}\{j_m\}\} & \\
\bigcup_{(j_m, Sb)\in BE^{[M, N]}(p, s)}\{Sb\} & \mbox{ otherwise } 
\end{array}\right.
\end{equation}
\end{footnotesize}




\end{definition}
\noindent

Formula (\ref{def-ExtR}) states exactly the set of all possible
extensions of pattern $p$ within $s$. In case where $(N \geq \#s)$,
since that any extension from $BE^{[M,N]}(p, s)$ always reaches the
end of the sequence $s$, thus all possible extensions can be
aggregated within one unique extension going from the lowest starting
position $j'_{m} = \min_{(j_m, Sb)\in BE^{[M, N]}(p,s)}\{j_m\}$. We
point out that these cases $(N \geq \#s)$  cover the special case of no
gap $gap[0, \infty]$. 

The right pattern extensions of $p$ in \SDB is the collection of all its right 
pattern extensions in all sequences of \SDB:  
\begin{small}
\begin{equation}\label{def-ExtR-SDB}
\RE{p}{\SDB}=\bigcup_{(sid, s) \in \SDB} \{(sid, \RE{p}{s})\}
\end{equation}
\end{small}


\vspace*{-.3cm} 
\begin{example}\label{example:LR}
Let $p_1=\angx{AC}$ be a pattern and the gap constraint be
\gapMN{0}{1}. We have
$AllOcc(p_1,\SDB_1[2]) = \{[1,2]), [1,3], [5,6]\}$.  
The right pattern extensions of $p_1$ in $SDB_1[2]$ is equal to
$\REG{p_1}{SDB_1[2]} {[0,1]} = 
\{\angx{CB},\angx{BA}, \angx{B}\}$. The right pattern extensions of
$p_1$ in $\SDB_1$ is given by     
$\REG{p_1}{SDB_1} {[0,1]} =\{(1,\{\angx{DB}\}), (2,\{\angx{CB},$ $\angx{BA}, \angx{B}\}),
(3,\{\angx{BE}\}), (4,\{\angx{C}\})\}$. 

Let the gap constraint be \gapMN{0}{\infty}. To compute
$\REG{p_1}{SDB_1[2]} {[0,\infty]}$,  
only the first occurrence of $p_1$ in $\SDB_1[2]$ need to be
considered (i.e. $[1,2]$) (cf. Definition~\ref{def:left:right}). 
Thus, $\REG{p_1}{SDB_1[2]} {[0,\infty]}$
$=\{\angx{CBACB}\})$. The right pattern extensions of $p_1$ in
$\SDB_1$ is equal to $\REG{p_1}{SDB_1} {[0,\infty]}$
$=\{(1,\{\angx{DB}\})$, $(2,\{\angx{CBACB}\})$, $(3,\{\angx{
  BEEC}\})$, $(4,\{\angx{C }\})\}$.    
\end{example}

Given a \gap constrained pattern $p$ in \SDB, according to proposition~\ref{prefix-gap-property}, 
the set of all items  locally frequent within the right pattern extensions of $p$ in \SDB can be used 
to extend $p$. Proposition~\ref{prop:SupCount} establishes the support count 
of a sequence $\gamma$ w.r.t. its right pattern extensions.

\vspace*{-.1cm} 
\begin{proposition}[Support count]\label{prop:SupCount}
For any sequence $\gamma$ in $\SDB$ with prefix $\alpha$ and postfix $\beta$
s.t. $\gamma= concat(\alpha,\beta$), 
$sup_{\SDB}^{\MNgap}(\gamma) = sup_{\RE{\alpha}{\SDB}}(\beta)$. 
\end{proposition}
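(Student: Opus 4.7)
The plan is to prove the equality by reducing it to a \emph{single-item extension lemma} and then iterating: for any prefix $\pi$ and item $b\in\I$, $sup^{[M,N]}_{\SDB}(\pi \cdot b) = sup_{\RE{\pi}{\SDB}}(\angx{b})$, and repeatedly peeling off the first item of $\beta = b_1 \cdots b_n$ gives the general identity, since extending $\pi$ to $\pi \cdot b_i$ transforms $\RE{\pi}{\SDB}$ into $\RE{\pi \cdot b_i}{\SDB}$ by re-application of Definition~\ref{def:left:right}.

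For the single-item lemma, I would establish, for each $(sid, s) \in \SDB$, the $sid$-preserving equivalence $(\pi \cdot b) \ginf s \iff b \in \sigma$ for some $\sigma \in \RE{\pi}{s}$; summing the indicator over $\SDB$ then yields the equality of supports.

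$(\Rightarrow)$ From an occurrence of $\pi \cdot b$ at positions $j_1 < \cdots < j_m < j_{m+1}$, the sub-occurrence at $[j_1, j_m]$ is itself a valid occurrence of $\pi$ in $s$ (its internal gaps are inherited), and the gap condition at the junction forces $j_{m+1} \in [j_m + M + 1, \min(j_m + N + 1, \#s)]$. Thus $b = s[j_{m+1}]$ lies in the basic extension $\mathtt{SubSeq}(s, j_m + M + 1, \min(j_m + N + 1, \#s))$. When $N < \#s$ this basic extension is retained in $\RE{\pi}{s}$; when $N \geq \#s$ only the extension $\sigma^{\star}$ originating at the leftmost occurrence $j^{\star}_m \leq j_m$ is retained, but that extension reaches position $\#s$ and is therefore a super-sequence of every other basic extension, so $b$ still appears in $\sigma^{\star}$. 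Conversely, any occurrence of $b$ inside some $\sigma \in \RE{\pi}{s}$ pulls back to a position $p \in [j_m + M + 1, \min(j_m + N + 1, \#s)]$ in $s$, which concatenates with the occurrence of $\pi$ generating $\sigma$ to produce a valid occurrence of $\pi \cdot b$ under $gap[M,N]$.

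The main delicate step is the $N \geq \#s$ branch of Definition~\ref{def:left:right}: one must check that collapsing all basic extensions into the one originating at the leftmost occurrence does not lose any continuation, which boils down to the observation that a smaller starting index yields a super-sequence when every extension runs to the end of $s$. Once the single-item lemma is in place, chaining it along $b_1, \ldots, b_n$ and matching up $sid$'s at every step delivers the full proposition.
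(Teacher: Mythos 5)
The paper states Proposition~\ref{prop:SupCount} without any proof, so your attempt has to stand on its own. The part you actually prove --- the single-item lemma --- is correct and carefully done: for each $(sid,s)$, an occurrence of $concat(\pi,\angx{b})$ under \gap yields an occurrence $[j_1,j_m]$ of $\pi$ whose window contains the position of $b$, and conversely; and your handling of the $N\geq\#s$ branch (the retained window starts no later than any other and runs to $\#s$, hence absorbs every other window) is exactly the delicate point of Definition~\ref{def:left:right}. Note that this length-one case is also the only instance the paper ever uses (in the definition of $\RF{\sigma}{\SDB}$ and in the proof of Proposition~\ref{prop-consistency}).

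The gap is in the lifting to a general postfix $\beta=\angx{b_1\dots b_n}$. Peeling off items applies your lemma each time to a \emph{different} extension database: after the first step you are reasoning about $\RE{concat(\alpha,\angx{b_1})}{\SDB}$, not about $\RE{\alpha}{\SDB}$, so the chain ends with $sup_{\RE{concat(\alpha,\angx{b_1\dots b_{n-1}})}{\SDB}}(\angx{b_n})$, which is not the stated right-hand side. This is not a presentational slip: under the only reading the paper makes available (a $sid$ counts if $\beta$ is contained in one of its extension windows), the multi-item identity is false, because a window only bounds the position of the \emph{first} appended item, whereas later items of $\beta$ may legitimately lie beyond it. Concretely, in $\SDB_1$ with \gapMN{0}{1}, take $\gamma=\angx{ACB}$, $\alpha=\angx{A}$, $\beta=\angx{CB}$: then $sup^{[0,1]}_{\SDB_1}(\gamma)=3$ (sequences $1$, $2$, $3$), while $\REG{\angx{A}}{\SDB_1}{[0,1]}=\{(1,\{\angx{BC}\}),(2,\{\angx{CC},\angx{CB}\}),(3,\{\angx{DC}\}),(4,\{\angx{AC},\angx{CC}\})\}$ and only sequence $2$ has a window containing $\angx{CB}$ (in sequence $1$ the $B$ of $\gamma$ sits at position $5$, outside the window ending at position $3$). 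Since the proposition quantifies over \emph{any} prefix/postfix split, no chaining argument can rescue it as written; what is true, and what your lemma establishes, is the case $\#\beta=1$ (equivalently, the recursive reformulation in which the extension database is recomputed at every step). You should either restrict the statement to single-item postfixes or make that recursive reading explicit, rather than claiming the identity for the fixed database $\RE{\alpha}{\SDB}$.
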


This proposition ensures that only the sequences in $\SDB$ grown from
$\alpha$ need to be considered for the support count of a sequence
$\gamma$. From proposition \ref{prop:SupCount}, we can derive the following 
proposition to establish a condition to check when a pattern is a \gap 
constrained sequential pattern. 

\vspace*{-.1cm} 
\begin{proposition}
\label{prop:consistency}
Let \SDB be a sequence database and a minimum support threshold $minsup$. 
A pattern $p$ is a \gap  constrained sequential pattern in \SDB if and
only if the following condition holds:  
$\#\RE{p}{\SDB} \geq minsup$
\end{proposition}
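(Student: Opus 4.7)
The plan is to reduce the proposition to the definition of \gap constrained sequential pattern mining by proving the cardinality equality $\#\RE{p}{\SDB} = \gapSup(p)$. Once this equality is established, the claim follows immediately: by definition $p$ is a \gap constrained sequential pattern iff $\gapSup(p) \geq minsup$, which is then equivalent to $\#\RE{p}{\SDB} \geq minsup$.

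To establish the equality, I would first unpack Equation (\ref{def-ExtR-SDB}). By Definition \ref{def:left:right}, $\RE{p}{s}$ is defined only when $\gapinf{p}{s}$ holds (in which case $AllOcc(p,s)$ is non-empty and Equation (\ref{def-ExtR}) produces a well-defined set of extensions). Hence the union $\bigcup_{(sid,s) \in \SDB}\{(sid,\RE{p}{s})\}$ contributes exactly one pair $(sid,\RE{p}{s})$ per sequence $(sid,s)$ satisfying $\gapinf{p}{s}$, i.e., per element of $\gapCover(p)$. Because $sid$ is a unique sequence identifier, the map $(sid,s)\mapsto(sid,\RE{p}{s})$ is injective, and therefore
\[
\#\RE{p}{\SDB} \;=\; \#\gapCover(p) \;=\; \gapSup(p).
\]

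An alternative, slightly shorter route would be to invoke Proposition~\ref{prop:SupCount} with the trivial decomposition $\gamma = p$, $\alpha = p$, $\beta = \angx{}$: the identity $\gapSup(p) = sup_{\RE{p}{\SDB}}(\angx{})$, together with the fact that the empty sequence is trivially contained in every entry of $\RE{p}{\SDB}$, gives the same conclusion in one step.

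The only delicate point, under either route, is the handling of sequences in $\SDB$ that do \emph{not} contain $p$ under \gap: in the direct argument they simply do not contribute to the union since $\RE{p}{s}$ is then undefined (equivalently, empty), and in the support-count route they contribute nothing to $sup_{\RE{p}{\SDB}}(\angx{})$ for the same reason. No deeper obstacle is expected, since the content of the proposition is essentially a bookkeeping identity between the cover of $p$ and the index set of $\RE{p}{\SDB}$.
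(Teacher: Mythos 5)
Your proposal is correct and follows essentially the same route as the paper, which gives no explicit proof but presents the proposition as a direct consequence of Proposition~\ref{prop:SupCount} (your ``alternative route'' with $\beta=\angx{}$ is precisely that derivation, and your primary argument is just the same bookkeeping made explicit: $\RE{p}{\SDB}$ contains exactly one pair $(sid,\RE{p}{s})$ per sequence of $\gapCover(p)$, so $\#\RE{p}{\SDB}=\gapSup(p)$). You also correctly handle the only delicate point, namely that sequences not containing $p$ under \gap contribute nothing to the union in Equation~(\ref{def-ExtR-SDB}).
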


\vspace*{-.35cm} 
\begin{example}
Let $minsup$ be $2$ and the gap constraint be \gapMN{0}{1}. 
From Example~\ref{example:LR}, we have $\#\REG{p_1}{\SDB_1}{[0,1]} = 4 \geq minsup$.  
Thus, $p_1 = \angx{AC}$ is a \gapMN{0}{1} constrained sequential pattern. 
The  locally frequent items within the right pattern extensions 
$\REG{p_1}{\SDB_1}{[0,1]}$ of $p_1$  are
$B$ and $C$ with supports of $3$ and $2$ respectively. 
According to proposition~\ref{prop:SupCount}, $p_1$ can be extended to  
two \gapMN{0}{1} constrained sequential patterns $\angx{ACB}$ and $\angx{ACC}$ . 
\end{example}

\vspace*{-.2cm}
\section{Related works}
\label{section:related-works}
\vspace*{-.15cm}

\paragraph{\bf Specialized methods for GSPM.} 
The SPM was first proposed in~\cite{DBLP:conf/icde/AgrawalS95}. 
Since then, many efficient specialized approaches have been proposed~\cite{Ayres:2002,DBLP:conf/icde/PeiHPCDH01,DBLP:journals/ml/Zaki01}. 
There are also several methods focusing on gap constraints.  
Zaki~\cite{DBLP:conf/cikm/Zaki00} first proposed \cspade, a depth-first search
based on a vertical database format, incorporating max-gap, max-span and length constraints.  
Ji and al.~\cite{DBLP:conf/icdm/JiBD05} and Li and al.~\cite{ChunLi2008}
studied the problem of mining frequent patterns with gap constraints. 
In \cite{DBLP:conf/icdm/JiBD05}, a minimal distinguishing subsequence that occurs frequently in the positive sequences and
infrequently in the negative sequences is proposed, where the maximum gap constraint is defined. 
In \cite{ChunLi2008}, closed frequent patterns with gap constraints are mined. 
All these proposals, though efficient, lack of genericity to handle simultaneously various types of constraints.

\paragraph{\bf CP Methods for GSPM.}
There are few methods for 
SPM with gap constraints using CP. 
\cite{metivierLML13} have proposed to model
a sequence 
using an automaton capturing all subsequences that can occur in it. 
The gap constraint is encoded by removing from the automaton all transitions 
that does not respect the gap constraint.
\cite{DBLP:conf/ictai/KemmarULCLBC14} have proposed a CSP model for
SPM with explicit wildcards\footnote{A wildcard is a special symbol that
matches any item of $\I$ including itself.}. The gap constraints is
enforced using the regular global constraint. 
\cite{NegrevergneCPIAOR15} have 
proposed two CP encodings for the SPM. The first one uses a global
constraint to encode the subsequence relation (denoted \cpsm), 
while the second one (denoted \cps) encodes explicitly this relation
using additional variables and constraints in order to support
constraints like gap. However, all these proposals usually lead to
constraints network of huge size. Space complexity is clearly
identified as the main bottleneck behind the competitiveness of these
declarative approaches. In~\cite{DBLP:conf/cp/KemmarLLBC15}, we have
proposed the global constraint \pp for sequential pattern mining which remedies
to this drawback. However, this constraint
cannot be directly
extended to handle gap constraints. This requires changing 
the way the subsequence relation is encoded.

The next section introduces
the global constraint \gapCP enabling to handle SPM with or without gap constraints.
\gapCP relies on the prefix anti-monotonicity of the gap constraint and on the right pattern extensions to provide an efficient filtering.
This global constraint does not require any reified constraints nor any extra variables to encode the subsequence relation.


\vspace*{-.3cm}
\section{\gapCP global constraint}
\label{sec:gap}
\newcommand\codex[1]{\mbox{\sc #1}}
\vspace*{-.15cm}
This section is devoted to the \gapCP global constraint.
Section~\ref{sec:pattern} defines the \gapCP global constraint and presents the CSP modeling. 
Section~\ref{consistency} shows how the filtering can take advantage of
the prefix anti-monotonicity property of the \gap constraint (see Proposition~\ref{prop-filtering})
and of the right pattern extensions (see Proposition~\ref{prop-consistency}) 
to remove inconsistent values from the domain of a future variable.
Section~\ref{Filtering} details the filtering algorithm and
Section~\ref{complexities} provides its temporal and spatial complexities.

\vspace*{-.25cm}
\subsection{CSP modeling for GSPM}
\label{sec:pattern}

\noindent
A {\it Constraint Satisfaction Problem} (CSP) consists of 
a set $X$ of $n$ variables, 
a domain $\mathcal{D}$ mapping each variable $X_i \in X$ to a finite set of values $D(X_i)$, 
and a set of constraints $\mathcal{C}$. 
An assignment $\sigma$ is a mapping from variables in $X$ to
values in their domains.
%
A constraint $c \in \mathcal{C}$ is a subset of the 
cartesian product of the domains of the variables that occur in $c$. 
The goal is to find an assignment such that all constraints are satisfied.  

\smallskip
\noindent {\bf (a) Variables and domains.} 
Let $P$ be the unknown pattern of size $\ell$ we are looking for. 
The symbol $\vide$ stands for an empty item and denotes the end of a
sequence.  
We encode the unknown pattern $P$ of maximum length $\ell$ with a
sequence of $\ell$ variables $\angx{P_1,P_2,\ldots,P_\ell}$. Each
variable $P_j$ represents the item in the $jth$ position of the
sequence. The size $\ell$ of $P$ is determined by the length of the
longest sequence of $\SDB$. The domains of variables are defined as
follows: (i) $D(P_ 1)= \I$ to avoid the empty 
sequence, and (ii) $\forall i \in [2\ldots\ell],  D(P_ i)=
\I\cup\{\vide\}$. To allow patterns with less than $\ell$ items, we
impose that $\forall 
i \in [2.. (\ell$$-$$1)], (P_i=\vide) \rightarrow (P_{i+1} = \vide)$.

\smallskip
\noindent {\bf (b) Definition of \gapCP.}
The global constraint \gapCP encodes both subsequence relation $\ginf$
under gap constraint \gap and minimum frequency constraint directly on the data. 

\begin{definition}[\gapCP global constraint]
Let $P = \angx{P_1,P_2,\ldots,P_\ell}$ be a pattern of size $\ell$ and
\gap be the gap constraint.  
$\angx{d_1, ..., d_{\ell}} \in D(P_1)\times \ldots \times D(P_\ell)$
is a solution of\mbox{ }$\gapCP(P,\SDB,minsup,M,N)$ iff
$\gapSup(\angx{d_1, ..., d_{\ell}}) \geq minsup$. 
\end{definition}

\begin{proposition}
\label{prop-solution}
$\gapCP (P, \SDB,minsup,M,N)$ has a solution iff  
there exists an assignment $\sigma = \angx{d_1, ..., d_{\ell}}$ of
variables of $P$ s.t. $\#\RE{\sigma}{\SDB} \geq minsup$.
\end{proposition}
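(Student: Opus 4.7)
The plan is to obtain Proposition~\ref{prop-solution} as a direct corollary of Proposition~\ref{prop:consistency}, simply by unfolding what it means for the global constraint \gapCP to admit a solution. The argument is short because all of the combinatorial work has already been done in Propositions~\ref{prop:SupCount} and~\ref{prop:consistency}.

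First, I would unfold the definition of a solution of \gapCP. By the definition of the constraint, an assignment $\sigma = \angx{d_1, \ldots, d_\ell}$ of the variables $\angx{P_1, \ldots, P_\ell}$ is a solution of $\gapCP(P, \SDB, minsup, M, N)$ if and only if $\gapSup(\sigma) \geq minsup$. Here $\sigma$ is read as the sequence of its leading non-$\vide$ items; the propagation constraint $(P_i = \vide) \rightarrow (P_{i+1} = \vide)$ imposed on the domains guarantees that this reading is unambiguous and produces a well-formed sequence over $\I$.

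Second, I would apply Proposition~\ref{prop:consistency} to the sequence $\sigma$: the latter is a \gap-constrained sequential pattern of \SDB, equivalently $\gapSup(\sigma) \geq minsup$, if and only if $\#\RE{\sigma}{\SDB} \geq minsup$. Chaining the two equivalences then yields
\[\sigma \text{ is a solution of } \gapCP \;\Longleftrightarrow\; \gapSup(\sigma) \geq minsup \;\Longleftrightarrow\; \#\RE{\sigma}{\SDB} \geq minsup,\]
and existentially quantifying over $\sigma$ gives the stated biconditional.

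There is no real obstacle here: the proposition is essentially a bridge lemma that recasts the definition of a \gap-constrained sequential pattern in the CSP vocabulary used in the remainder of Section~\ref{sec:gap}. The only mild point worth spelling out is that the encoding of a pattern of length at most $\ell$ via $\ell$ variables padded with $\vide$ corresponds to a single genuine sequence in the data-mining sense, which is precisely what is ensured by the $\vide$-propagation constraint on the domains.
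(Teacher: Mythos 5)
Your proof is correct and follows exactly the route the paper takes: the paper's own proof simply states that the result is a direct consequence of Proposition~\ref{prop:consistency}, which is precisely the bridge you make explicit by unfolding the definition of a solution of \gapCP and chaining the two equivalences. Your added remark about the $\vide$-padding of the $\ell$ variables is a reasonable clarification but not a departure from the paper's argument.
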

\vspace*{-.1cm}
\noindent
{\it Proof: }
This is a direct consequence of proposition \ref{prop:consistency}. 
$\Box$ 

\smallskip
\noindent {\bf (c) Other SPM constraints} can be directly modeled as follows:

\noindent
- \emph{Minimum Size} constraint restricts the number of items of a pattern  to be at least $\ell_{min}$:
$minSize(P, \ell_{min})  \equiv
\bigwedge_{i=1}^{i=\ell_{min}} (P_i \neq \square)$

\noindent
- \emph{Maximum Size} constraint restricts the number of items of a pattern to be at most $\ell_{max}$:  
$maxSize(P, \ell_{max})  \equiv \bigwedge_{i=\ell_{max}+1}^{i=\ell} (P_i = \square)$

\noindent
- \emph{Membership} constraint states that a subset of items $V$ must belong (or not) to the extracted patterns. 
$item(P, V) \equiv \bigwedge_{t \in V} \mbox{\tt Among}(P,\{t\},l,u)$
enforces that items of $V$ should occur at least $l$ times and at most $u$ times in $P$.  
To forbid items of $V$ to occur in $P$, $l$ and $u$ must be set to $0$.

\subsection{Principles of filtering}
\label{consistency}

\noindent {\bf (a) Maintaining a local consistency.}
SPM is a challenging task due to the
exponential number of candidates that should be parsed to find the
frequent patterns. For instance, with $k$ items there are $O(n^k)$
potentially candidate patterns of length at most $k$ in a sequence of
size $n$. With gap constraints, the
problem is even much harder since the complexity of checking for
subsequences taking a gap constraint into account is higher than
the complexity of the standard subsequence
relation. Furthermore, the NP-hardness of
mining maximal\footnote{A sequential pattern $p$ is
maximal if there is no sequential pattern $q$ such that $p \inf q$.}
frequent sequences was established in \cite{Guizhen-2006} by proving 
the \#P-completeness of the problem of counting the number of maximal
frequent sequences. Hence, ensuring {\it Domain Consistency} (DC) for \gapCP
i.e., finding, for every variable $P_j$, a value $d_j \in D(P_j)$, 
satisfying the constraint is NP-hard.  

So, the filtering of \gapCP constraint maintains a consistency lower than DC.
This consistency is based on specific properties of the \gap constraint
and resembles forward-checking (regarding Proposition~\ref{prop-consistency}). 
\gapCP is considered as a global constraint, since all variables
share the same internal data structures that awake and drive the
filtering. 
The prefix anti-monotonicity property of the \gap constraint (see Proposition~\ref{prop-filtering})
and of the right pattern extensions (see Proposition~\ref{prop-consistency})
will enable to remove inconsistent values from the domain of a future variable.

\smallskip
\noindent {\bf (b) Detecting inconsistent values.}
Let $\RF{\sigma}{\SDB}$ be the set of locally frequent
items within the right pattern extensions, defined by $\{v \in \I
\,|\, \#\{sid \,|\, (sid, E) \in $ $\RE{\sigma}{\SDB} \wedge (\exists
\alpha \in E \wedge \angx{v}\inf \alpha)\} \geq minsup\}$. The following proposition 
characterizes values, of a future (unassigned) variable $P_{j+1}$,
that are consistent with the current assignment of variables $\angx{P_1, \dots, P_j}$. 

\vspace*{-.1cm}
\begin{proposition}
\label{prop-consistency}
Let~\footnote{We indifferently denote $\sigma$ by $\angx{d_1,
    \dots, d_j}$ or by $\angx{\sigma(P_1), \dots, \sigma(P_{j})}$.} $\sigma$
$=\angx{d_1, \dots, d_j}$ be a current assignment of
variables $\angx{P_1, \dots, P_j}$, $P_{j+1}$ be a future variable. 
A value $d \in D(P_{j+1})$ occurs in a solution for the global
constraint $\gapCP(P, \SDB,
minsup,M,N)$ iff $d \in \RF{\sigma}{\SDB}$. 
\end{proposition}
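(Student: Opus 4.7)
I would prove the biconditional by combining the two results already established in Section~\ref{right:pattern:extension}: the prefix anti-monotonicity of \gap (Proposition~\ref{prefix-gap-property}) and the support-count identity (Proposition~\ref{prop:SupCount}). The conceptual bridge is that, given the current partial assignment $\sigma$, any completion of $P$ within a solution necessarily has $\sigma\cdot\angx{d}$ as a prefix, so the frequency of $\sigma\cdot\angx{d}$ in \SDB dominates the frequency of the full solution. I would treat separately the trivial case $d=\vide$ (which merely terminates the pattern at position $j$ and is always consistent when $\sigma$ is) and focus the real argument on the substantive case $d\in\I$.

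\emph{Forward direction} ($\Rightarrow$). Suppose $d\in\I$ appears in some solution, i.e.\ there exist $d_{j+2},\dots,d_\ell$ such that the effective pattern $\tau=\angx{d_1,\dots,d_j,d,d_{j+2},\dots,d_k}$, obtained by dropping the trailing $\vide$'s ($k\le\ell$), satisfies $\gapSup(\tau)\ge minsup$. Since $\sigma\cdot\angx{d}$ is a prefix of $\tau$, Proposition~\ref{prefix-gap-property} guarantees that every database sequence containing $\tau$ under \gap also contains $\sigma\cdot\angx{d}$ under \gap, so $\gapSup(\sigma\cdot\angx{d})\ge\gapSup(\tau)\ge minsup$. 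Applying Proposition~\ref{prop:SupCount} with prefix $\sigma$ and postfix $\angx{d}$ rewrites this count as $\#\{sid\mid (sid,E)\in\RE{\sigma}{\SDB},\ \exists\alpha\in E:\gapinf{\angx{d}}{\alpha}\}\ge minsup$, which is exactly the defining condition for $d\in\RF{\sigma}{\SDB}$.

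\emph{Reverse direction} ($\Leftarrow$). Assume $d\in\RF{\sigma}{\SDB}$. Unfolding the definition, at least $minsup$ sequence identifiers admit a right pattern extension of $\sigma$ that contains $\angx{d}$, so Proposition~\ref{prop:SupCount} (applied again with prefix $\sigma$ and postfix $\angx{d}$) yields $\gapSup(\sigma\cdot\angx{d})\ge minsup$. I then complete $\sigma\cdot\angx{d}$ into a full assignment of $P$ by padding: $\tau=\angx{d_1,\dots,d_j,d,\vide,\dots,\vide}$. This $\tau$ respects the domains ($\vide\in D(P_i)$ for $i\ge 2$) and the channeling implications $(P_i=\vide)\to(P_{i+1}=\vide)$, and its non-$\vide$ prefix is precisely $\sigma\cdot\angx{d}$; hence $\tau$ is a solution of $\gapCP(P,\SDB,minsup,M,N)$ in which $P_{j+1}$ takes the value $d$.

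\emph{Expected difficulty.} The argument is short because the two key tools are already in place. The only mild subtlety is the bookkeeping around $\vide$: one must be explicit that the CSP encoding treats $\vide$ as an end-of-pattern marker, so the support of a $\vide$-padded assignment coincides with the support of its non-$\vide$ prefix, and that $d=\vide$ is trivially consistent whenever the current $\sigma$ is itself extendable (i.e.\ $\#\RE{\sigma}{\SDB}\ge minsup$ by Proposition~\ref{prop:consistency}). Once this convention is made explicit, each direction reduces to a direct application of Propositions~\ref{prefix-gap-property} and~\ref{prop:SupCount}.
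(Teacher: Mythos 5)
Your proposal is correct, and for the direction the paper actually argues (namely $d \in \RF{\sigma}{\SDB}$ implies $d$ participates in a solution) it follows the same route as the paper: apply Proposition~\ref{prop:SupCount} with prefix $\sigma$ and postfix $\angx{d}$ to convert the local frequency of $d$ within the right extensions into $\gapSup(concat(\sigma,\angx{d})) \geq minsup$, and then extend the assignment. Where you genuinely differ is in completeness: the paper's proof stops at that single implication (and leaves implicit how $\sigma \cup \angx{d}$ becomes a full assignment of $\angx{P_1,\dots,P_\ell}$), whereas you also establish the forward direction, using the prefix anti-monotonicity of \gap (Proposition~\ref{prefix-gap-property}) to get $\gapSup(concat(\sigma,\angx{d})) \geq \gapSup(\tau) \geq minsup$ for any solution $\tau$ extending $\sigma\cdot\angx{d}$, and converting back with Proposition~\ref{prop:SupCount} to conclude $d \in \RF{\sigma}{\SDB}$. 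You also make explicit the $\vide$ bookkeeping (padding with $\vide$ to obtain a full assignment, and treating $d=\vide$ separately since $\RF{\sigma}{\SDB} \subseteq \I$), which matches how the filtering algorithm behaves in that it never prunes $\vide$ by this rule. In short, your version buys a two-sided justification of the ``iff'' and tighter handling of the end-of-pattern marker at the cost of a longer writeup, while the paper's proof is terser but, read literally, only proves one of the two implications.
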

\vspace*{-.1cm}
\noindent
{\it Proof: } 
Assume that $\sigma =\angx{d_1, \dots, d_j}$ is \gap constrained
sequential pattern in $\SDB$. Suppose that value
$d \in D(P_{j+1})$ appears in $\RF{\sigma}{\SDB}$. As the local
support of $d$ within the right extensions is equal to
$sup_{\RE{\sigma}{\SDB}}(\angx{d})$, from
proposition~\ref{prop:SupCount} we have $\gapSup(concat(\sigma, 
\angx{d})) = sup_{\RE{\sigma}{\SDB}}(\angx{d})$. Hence, we can get a
new assignment $\sigma \cup \angx{d}$ that satisfies the constraint. 
Therefore, $d \in D(P_{j+1})$ participates in a solution.
$\Box$

From proposition~\ref{prop-consistency} and according to the
prefix anti-monotonicity property of the gap constraint, we can derive
the following pruning rule: 

\begin{proposition}
\label{prop-filtering}
Let $\sigma=\angx{d_1, \dots, d_j}$ be a current assignment of
variables $\langle P_1,$ $\dots,$ $P_j \rangle$. All values $d \in
D(P_{j+1})$ that are not in $\RF{\sigma}{\SDB}$ can be removed
from the domain of variable $P_{j+1}$.  
\end{proposition}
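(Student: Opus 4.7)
The plan is to derive Proposition~\ref{prop-filtering} as the contrapositive of (the forward direction of) Proposition~\ref{prop-consistency}, using the prefix anti-monotonicity of Proposition~\ref{prefix-gap-property} to transfer frequency from a hypothetical complete solution down to the short prefix in which the candidate value $d$ sits. Concretely, I would fix an item $d \in D(P_{j+1})$ with $d \notin \RF{\sigma}{\SDB}$ and assume, for contradiction, that $d$ participates in some solution $\tau = \angx{d_1, \dots, d_j, d, d_{j+2}, \dots, d_\ell}$ of $\gapCP(P, \SDB, minsup, M, N)$.

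From the definition of a solution, $sup_{\SDB}^{[M,N]}(\tau) \geq minsup$. I would then use Proposition~\ref{prefix-gap-property} to lift this lower bound to the prefix $\sigma' = \angx{d_1, \dots, d_j, d}$ of $\tau$: any occurrence $(s,[j_1,j_\ell])$ of $\tau$ in a sequence $s$ restricts to an occurrence $(s,[j_1,j_{j+1}])$ of $\sigma'$ under the same gap window $[M,N]$, since the local condition $M \leq j_{k+1} - j_k - 1 \leq N$ is preserved under truncation of the list of matched positions. Hence $cover_{\SDB}^{[M,N]}(\sigma') \supseteq cover_{\SDB}^{[M,N]}(\tau)$ and therefore $sup_{\SDB}^{[M,N]}(\sigma') \geq minsup$. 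I would then apply Proposition~\ref{prop:SupCount} with prefix $\sigma$ and postfix $\angx{d}$ to rewrite $sup_{\SDB}^{[M,N]}(\sigma') = sup_{\RE{\sigma}{\SDB}}(\angx{d})$; the latter is, by construction, the number of sequence identifiers $(sid, E) \in \RE{\sigma}{\SDB}$ such that $\angx{d} \inf \alpha$ for some $\alpha \in E$. So this count is at least $minsup$, which by the very definition of $\RF{\sigma}{\SDB}$ means $d \in \RF{\sigma}{\SDB}$, contradicting the starting assumption. Hence no such solution $\tau$ can exist, i.e., $d$ can safely be discarded from $D(P_{j+1})$.

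The main obstacle is not conceptual, since every ingredient has already been proved. The care needed lies in articulating cleanly that the truncation of an occurrence of $\tau$ at its $(j{+}1)$-th matched position is genuinely an occurrence of $\sigma'$ in the \gap sense, and in keeping the distinction between the syntactic prefix anti-monotonicity of the subsequence relation (Proposition~\ref{prefix-gap-property}) and its numerical consequence $sup_{\SDB}^{[M,N]}(\sigma') \geq sup_{\SDB}^{[M,N]}(\tau)$ used above. Edge cases such as $d = \vide$ are outside the scope of this filtering rule; they are handled instead by the global feasibility test $\#\RE{\sigma}{\SDB} \geq minsup$ of Proposition~\ref{prop:consistency} together with the structural constraint $P_i = \vide \Rightarrow P_{i+1} = \vide$.
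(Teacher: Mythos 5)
Your proof is correct and follows essentially the route the paper itself takes: the paper derives this pruning rule directly from Proposition~\ref{prop-consistency} together with the prefix anti-monotonicity of \gap, i.e.\ exactly the ingredients you invoke (Proposition~\ref{prefix-gap-property}, Proposition~\ref{prop:SupCount}, and the definition of $\RF{\sigma}{\SDB}$). Your contrapositive argument in fact spells out the ``only if'' direction of Proposition~\ref{prop-consistency} that the paper's own proof of that proposition leaves implicit, and your remark excluding $d=\vide$ matches how the filtering algorithm actually applies the rule, so nothing is missing.
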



\begin{example}
Consider the running example of Table~\ref{tab:SDB}, let 
$minsup$ be $2$ and the gap constraint be $gap[1,2]$. 
Let $P = \angx{P_1,P_2,P_3,P_4}$ with $D(P_1) = \I$ and 
$D(P_2) = D(P_3) = D(P_4) = \I\cup\{\vide\}$. 
Suppose that $\sigma(P_1) = A$. We have
$\REG{\angx{A}}{\SDB_1}{[1,2]}$ $=$ $\{(1,\{\angx{CD}\}),
(2,\{\angx{CB},\angx{B}\}), (3,\{\angx{CB}\}), 
(4,\{\angx{CC},\angx{C}\})\}$. As $B$ and $C$ are the only locally
frequent items in $\REG{\angx{A}}{\SDB_1}{[1,2]}$, $\gapCP$ will
remove values $A$, $D$ and $E$ from $D(P_2)$.  
\end{example} 

The filtering of \gapCP relies on Proposition~\ref{prop-filtering} and is detailed in the next section.

\subsection{Filtering algorithm}
\label{Filtering}

\begin{algorithm}[t]
\begin{scriptsize}
\SetKwFunction{getRExt}{\codex{getRightExt}}
\SetKwFunction{getFItems}{\codex{getFreqItems}}
\caption{\small \codex{FILTER-GAP-SEQ}($\SDB$, $\sigma$, $j$, $P$,
  $minsup$, $M$, $N$) \label{algo:filter}} 
\KwData{$\SDB$: initial database; $\sigma$: current assignment
  $\angx{\sigma(P_1), \ldots,\sigma(P_j)}$; $minsup$: the minimum
  support threshold; $\mathcal{ALLOCC}$: internal data structure for
  storing occurrences of patterns in $\SDB$; $Ext_R$:
  internal data structure for storing right pattern extensions of
  $\sigma$ in $\SDB$.} 
\Begin
{
	\lnl{for0}$Ext_R \leftarrow \codex{getRightExt}(\SDB, \mathcal{ALLOCC}_{j-1},
  \sigma, M, N)$ \; 
	\lnl{for1}\If{$(\card{Ext_R} < minsup)$}{
    \lnl{for2} \Return False \;
	}            
    \lnl{pre1}\If{$(j\geq 2 \wedge \sigma(P_j) = \vide)$}{ 
    \lnl{pre2}\For{$k \leftarrow j+1$ \KwTo $\ell$}{
                                  \lnl{pre3}$P_{k}  \leftarrow \vide$ \;
    				}
   	}
	\Else{
    		\lnl{filter0} $\mathcal{RF} \leftarrow
                \getFItems(\SDB, Ext_R, minsup)$ \;
        \lnl{filter1} \ForEach{$a \in D(P_{j+1}) \, s.t. (a \neq \vide \wedge a \notin \mathcal{RF} )$}{
        \lnl{filter2}$D(P_{j+1}) \leftarrow  D(P_{j+1}) - \{a\}$ \;
                                 }                                  
        }
                        \lnl{back7} \Return True \;
     
}

\end{scriptsize}
\end{algorithm}

\vspace*{-.1cm}

\begin{algorithm}[t!]
\begin{scriptsize}
\SetKwFunction{ProcGetAllOcc}{\codex{Function getAllOcc}}
\SetKwFunction{getAllOcc}{\codex{getAllOcc}}
\caption{\small \codex{getRightExt}($\SDB$, $\mathcal{ALLOCC}_{j-1}$,
  $\sigma$, $M$, $N$) \label{algo:FB}}
\KwData{$\SDB$: initial database; $\mathcal{ALLOCC}_{j-1}$:
  occurrences of the partial assignment $\angx{\sigma(P_1), \dots,\sigma(P_{j-1})}$ in \SDB; 
  $\sigma$: the current partial assignment $\angx{\sigma(P_1),
    \ldots,\sigma(P_j)}$; $OccSet$: the positions of the first and
  last items of $\angx{\sigma(P_1),\ldots,\sigma(P_{j-1})}$ in
  $\SDB[sid]$; $Sb$: the positions of the first and last items
  of the right pattern extensions of $\sigma$ in $\SDB[sid]$.} 
\Begin
{
{
\lnl{p00}\If{$(\sigma=\angx{})$}{
\lnl{p01}\Return  $\{(sid, [1, \#s]) | (sid, s)\in \SDB\}$ \;
}
\lnl{p0}$\mathcal{ALLOCC}_{j} \leftarrow \getAllOcc(\SDB,
\mathcal{ALLOCC}_{j-1}, \sigma, M, N)$ \;
\lnl{p1}$Ext_R \leftarrow \emptyset$ \;
\lnl{p3}\ForEach{pair $(sid,OccSet) \in \mathcal{ALLOCC}_{j}$}{
\lnl{p4}$s \leftarrow \SDB[sid]$; $Sb \leftarrow \emptyset$  \;
\lnl{p5}\ForEach{pair $[j_1,j_m] \in OccSet$}{
\lnl{p6} $j_1' \leftarrow  j_m + M + 1$; $j_m'  \leftarrow min(j_m + N + 1,\#s)$ \;
\lnl{p7}\If{$(j_1' \leq j_m')$}{
\lnl{p9}$Sb \leftarrow Sb \cup \{(j_1',j_m')\}$ \;
}
}
\lnl{p11}$Ext_R \leftarrow Ext_R \cup \{(sid, Sb)\}$ \;
}
\lnl{p12}\Return $Ext_R$ \;
} 
}

\ProcGetAllOcc($\SDB$, $\mathcal{ALLOCC}_{j-1}$, $\sigma$, $M$, $N$) \;
\Begin
{
\lnl{pa1}$\mathcal{ALLOCC}_j \leftarrow \emptyset$; $inf \leftarrow
0$; $sup \leftarrow 0$\;
\lnl{pa2}\ForEach{pair $(sid,OccSet) \in \mathcal{ALLOCC}_{j-1}$}{
\lnl{pa3}$s \leftarrow \SDB[sid]$; $newOccSet \leftarrow \emptyset$; 
$redundant \leftarrow false$; $i \leftarrow 1$ \;
\lnl{pa4}\While{$(i \leq \#OccSet$ $\wedge \neg redundant)$}{
\lnl{pa4a}$[j_1,j_m] \leftarrow OccSet[i]$; $i \leftarrow i+1$\; 
\lnl{pa5}\If{$(\#\sigma = 1)$}{
\lnl{pa6}$inf \leftarrow 1$; $sup \leftarrow \#s$ \;
}
\Else{
\lnl{pa7}$inf \leftarrow j_m+M+1$; $sup \leftarrow min(j_m+N+1,\#s)$ \;
}
\lnl{pa8}$k \leftarrow inf$ \;
\lnl{pa9}\While{$((k \leq sup) \wedge (\neg redundant))$}{
\lnl{pa10}\If{$(s[k]=\sigma(P_j))$}{
\lnl{pa11}\If{$(\#\sigma = 1)$}{
\lnl{pa12}$newOccSet \leftarrow newOccSet \cup \{[k,k]\}$ \;
}
\Else{
\lnl{pa13}$newOccSet \leftarrow newOccSet \cup \{[j_1,k]\}$ \;
}
\lnl{pa14}\If{$(((sup=\#s) \wedge (\#\sigma > 1)) \vee (N\geq \#s))$}{
\lnl{pa15}$redundant \leftarrow true$ \;
}
}
\lnl{pa16}$k \leftarrow k+1$ \;
}
}
\lnl{pa17}\If{($newOccSet \neq \emptyset$)}{
\lnl{pa18}$\mathcal{ALLOCC}_j \leftarrow \mathcal{ALLOCC}_j \cup (sid, newOccSet)$ \;
}
}
\lnl{pa19}\Return $\mathcal{ALLOCC}_j$ \;
}
\end{scriptsize}
\end{algorithm}

\noindent
{\bf Algorithm~\ref{algo:filter}} describes the  pseudo-code of \gapCP filtering algorithm.
It takes as input:
the index $j$ of the last assigned variable in $P$,
the current partial assignment $\sigma=\langle\sigma(P_1),$ $\ldots,$ $\sigma(P_j)\rangle$, 
the minimum support threshold $minsup$, the minimum and the maximum gaps. 
The internal data-structure $\mathcal{ALLOCC}$ stores all the
intermediate occurrences of patterns in $\SDB$, 
where $\mathcal{ALLOCC}_j=AllOcc(\sigma, SDB)$, for $j\in[1\dots \ell]$.
If $\sigma = \angx{}$, then $\mathcal{ALLOCC}_0 = \{(sid,[1,\#s]) \,|\,$ $(sid,s)\in SDB\}$. 
$\mathcal{ALLOCC}_j$ is computed incrementally from
$\mathcal{ALLOCC}_{j-1}$ in order to enhance the efficiency.

Algorithm~\ref{algo:filter} starts by computing the right pattern
extensions $Ext_R$ of $\sigma$ in $\SDB$ (line~\ref{for0}) by calling 
function \codex{getRightExt} (see Algorithm~\ref{algo:FB}). Then, it  
checks whether the current assignment $\sigma$ satisfies the
constraint (see Proposition~\ref{prop-solution}) (line~\ref{for1}). 
If not, we stop growing $\sigma$ and return {\it False}. 
Otherwise, the algorithm checks if the last assigned
variable $P_j$ is instantiated to $\vide$ (line \ref{pre1}). 
If so, the end of the sequence is reached (since value $\vide$ can only
appear at the end) and the sequence $\angx{\sigma(P_1), \dots,
  \sigma(P_j)}$ is a $\gap$ constrained sequential pattern in
$\SDB$; hence, the algorithm sets the remaining $(\ell - j)$
unassigned variables to $\vide$ and returns {\it True} (lines
\ref{pre2}-\ref{pre3}). 
If $(P_j \neq \vide)$, the set of locally frequent
items, within the right pattern extensions $Ext_{R}$ of
$\sigma$ in $\SDB$, is computed by calling function
\codex{getFreqItems} (line~\ref{filter0}) and the domain of variable
$P_{j+1}$ is updated accordingly (lines~\ref{filter1}-\ref{filter2}).

\smallskip
\noindent
{\bf Algorithm \ref{algo:FB}} gives the pseudo-code of the function \codex{getRightExt}. 
First, if $\sigma$ is empty (i.e. $\#\sigma = 0$), all the sequences of $\SDB$ are considered as valid right pattern extensions;
the whole $\SDB$ should be returned.
Otherwise, the function \codex{getAllOcc} is called to compute the occurrences of $\sigma$ in $\SDB$ (line~\ref{p0}). 
Then, the algorithm processes all the entries of $\mathcal{ALLOCC}_j$, one by one (line \ref{p3}),
and, for each pair $(sid,OccSet)$, scans the occurrences of $\sigma$ in the sequence $sid$ (line~\ref{p5}).
For each occurrence $(j_1,j_m) \in OccSet$, the algorithm computes its right pattern extensions, 
i.e. the part of the sequence $sid$ which is in the range $[j_m+M+1, min(j_m+N+1,\#s)]$ (line~\ref{p6}).
If the new range is valid, it is added to the  set $Sb$ (line~\ref{p9}). 
After processing the whole entries in $OccSet$, the right pattern extensions of $\sigma$ 
in the sequence $sid$ are built and then added to the set $Ext_R$ (line~\ref{p11}). 
The process ends when all entries of $\mathcal{ALLOCC}_j$ have been considered. 
The right pattern extensions of $\sigma$ in $\SDB$ are then returned (line~\ref{p12}).

The function \codex{getAllOcc} computes incrementally
$\mathcal{ALLO}\-\mathcal{CC}_{j}$ from $\mathcal{ALLO}\-\mathcal{CC}_{j-1}$.
More precisely, lines~(\ref{pa5}-\ref{pa6}) and  
(\ref{pa11}-\ref{pa12}) are considered when the first variable $P_1$ 
is instanciated (i.e. $\#\sigma = 1$),  and 
{consequently all of its initial occurrences should be found and stored in $\mathcal{ALLOCC}_1$ through the initialization step
(lines~\ref{pa11}-\ref{pa12}). 
After that, $\mathcal{ALLOCC}_j (j>1)$ is incrementally
computed from $\mathcal{ALLOCC}_{j-1}$ through line (\ref{pa13}).  
}
To determine $\mathcal{ALLOCC}_j$, we avoid computing occurrences leading to redundant right pattern extensions
thanks to the conditions $((sup=\#s) \wedge (\#\sigma > 1))$ in line~(\ref{pa14}). 
Moreover, when computing the right pattern extensions, instead
of storing the part of subsequence $\angx{s[j_1'],\dots,s[j_m']}$, one can
only store the positions of its first and last items $(j_1',j_m')$ in the sequence $sid$.
Consider Example~\ref{example:LR}:
$\REG{\angx{AC}}{\SDB_1}{[0,1]}$ will be encoded as  
$\{(1, \{(4, 5)\})$, $(2, \{(3, 4), (4, 5)$, $(7, 7)\})$, $(3, \{(4, 5)\})$, $(4, \{(4, 4)\})\}$.

Finally, the filtering algorithm handles as efficiently the case {\it without gap constraints}. 
For each pair $(sid, OccSet)$, only the first occurrence $(j_1,j_m)$ in $OccSet$ is determined
thanks to the condition ($N\geq\#s$) in line~(\ref{pa14}).


\subsection{Temporal and spatial complexities of the filtering algorithm}
\label{complexities}

Let $m$$=$$|\SDB|$,
$d$$=$$|\I|$,
and $\ell$ be the length of the longest sequence in $\SDB$.
Computing $\mathcal{ALLOCC}_j$ from $\mathcal{ALLOCC}_{j-1}$ (see function \codex{GetAllOcc} of Algorithm 2)
can be achieved in $O(m \times \ell^2)$. 
The function \codex{getRightExt} (see Algorithm 2) processes all the occurrences of $\sigma$ 
in each sequence of the $\SDB$. 
In the worst case, it may exist $\ell$ occurrences for each sequence in the database. 
So, the time complexity of function \codex{getRightExt} is 
$O(m \times \ell^2+m \times \ell)$ i.e. $O(m \times \ell^2)$. 

\begin{proposition}\label{prop-complexity}
In the worst case, 
(i) filtering can be achieved in $O(m\times\ell^2+d)$
and (ii) the space complexity is $O(m\times \ell^2)$. 
\end{proposition}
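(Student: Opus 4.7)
The plan is to prove each bound by walking through Algorithms~\ref{algo:filter} and~\ref{algo:FB} in turn, showing that the dominant cost in every loop is bounded by $O(m\cdot\ell^2)$, then adding the $O(d)$ cost of the domain update.

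First I would analyse the auxiliary routine \codex{getAllOcc}, which is the real workhorse behind $\mathcal{ALLOCC}_j$. Its outer loop runs over the (at most) $m$ entries of $\mathcal{ALLOCC}_{j-1}$. For each entry, the inner loop traverses $OccSet$, whose size is bounded by $\ell$ since every occurrence $[j_1,j_m]\in OccSet$ is identified by its end position $j_m\in\{1,\dots,\#s\}$ and $\#s\le\ell$. For each such occurrence the algorithm scans a window $[inf,sup]\subseteq[1,\#s]$ of width at most $\ell$, performing constant work per position. Multiplying gives $O(m\cdot\ell^2)$. Next I would bound \codex{getRightExt}: after the $O(m\cdot\ell^2)$ call to \codex{getAllOcc}, it iterates once over $\mathcal{ALLOCC}_j$ and, per occurrence, does $O(1)$ work to compute the pair $(j_1',j_m')$; since $|\mathcal{ALLOCC}_j|\le m\cdot\ell$, this adds only $O(m\cdot\ell)$, keeping the total at $O(m\cdot\ell^2)$.

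Then I would turn to Algorithm~\ref{algo:filter}. Line~\ref{for0} costs $O(m\cdot\ell^2)$ by the above. The cardinality test and the propagation of $\vide$ through the remaining variables are $O(m)$ and $O(\ell)$ respectively. The call to \codex{getFreqItems} scans each sequence of $Ext_R$ marking which of the $d$ items occur in it and then tallies supports; since every sequence contributes at most $\ell^2$ positions (at most $\ell$ extensions of length at most $\ell$) and selecting items above $minsup$ costs $O(d)$, this step is $O(m\cdot\ell^2+d)$. The final domain pruning on $D(P_{j+1})$ is $O(d)$. Summing yields the announced $O(m\cdot\ell^2+d)$ bound for part~(i).

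For part~(ii), I would argue that the only structures that survive a filtering call are $\mathcal{ALLOCC}$ and $Ext_R$. At search depth $j$, we keep $\mathcal{ALLOCC}_0,\dots,\mathcal{ALLOCC}_j$ so that \codex{getAllOcc} can proceed incrementally upon backtrack; each level stores at most $m$ entries, each with at most $\ell$ constant-sized pairs, so one level is $O(m\cdot\ell)$ and all $\ell$ levels together are $O(m\cdot\ell^2)$. The structure $Ext_R$ has size $O(m\cdot\ell)$ and is dominated by $\mathcal{ALLOCC}$, giving the claimed $O(m\cdot\ell^2)$ space.

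The main obstacle I expect is the justification that every per-sequence occurrence set has size at most $\ell$ and that each extension window has width at most $\ell$; the first relies on the fact that distinct occurrences of the same prefix $\sigma$ in a given sequence $s$ must end at distinct positions, and the second follows from $\min(j_m+N+1,\#s)-(j_m+M+1)+1\le\#s\le\ell$. Once these two observations are pinned down, the rest of the analysis is a direct loop-by-loop accounting.
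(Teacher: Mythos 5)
Your analysis follows essentially the same route as the paper's proof: bound \textsc{getAllOcc} by $O(m\times\ell^2)$, add $O(m\times\ell)$ for building the extension windows in \textsc{getRightExt}, account for \textsc{getFreqItems} and the $O(d)$ domain update for part~(i), and charge the space to the $\ell$ levels of $\mathcal{ALLOCC}$ at $O(m\times\ell)$ each for part~(ii). Your looser $O(m\times\ell^2+d)$ estimate for \textsc{getFreqItems} (the paper charges it $O(m\times\ell)$, with the $O(d)$ pruning counted separately) is harmless, since the final bound is unchanged.

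The one weak point is exactly the step you single out as the crux. Your justification that each per-sequence occurrence set has size at most $\ell$ because ``distinct occurrences of the same prefix must end at distinct positions'' is not true: an occurrence is a pair $[j_1,j_m]$ of first and last positions, and several occurrences may share the same end position — in $\angx{AAA}$ under \gapMN{0}{1}, both $[1,3]$ and $[2,3]$ are occurrences of $\angx{AA}$. The paper does not prove this bound either; it simply asserts that in the worst case each sequence contributes $\ell$ occurrences. The defensible way to support the $\ell$ bound is to note that the right-extension window built from an occurrence depends only on its end position $j_m$, so for the purposes of \textsc{getRightExt} and of the support count occurrences need only be kept, or counted, per distinct end position, of which there are at most $\#s\le\ell$; as literally accumulated by \textsc{getAllOcc} (pairs $[j_1,k]$), the stored sets can be larger, so your argument needs this reformulation rather than the distinct-end-position claim. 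Once the per-sequence bound is granted in that form, the rest of your loop-by-loop accounting and your space argument coincide with the paper's proof.
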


\vspace*{-.1cm}
\noindent
{\it Proof: }
(i) The complexity of function \codex{getRightExt} is $O(m \times \ell^2)$.  
The total complexity of function \codex{GetFreqItems} is $O(m \times
\ell)$. Lines~(\ref{filter1}-\ref{filter2}) can be achieved in
$O(d)$. So, the whole complexity is $O(m \times \ell^2 + m \times \ell
+ d)$, i.e. $O(m\times\ell^2+d)$.  

\noindent
(ii) The space complexity of the filtering algorithm lies in the storage of
the $\mathcal{ALLOCC}$ internal data structure.
The occurrences $\mathcal{ALLOCC}_{j}$ of each assignment $\sigma$ in $\SDB$, 
with the length of $\sigma$ varying from $1$ to $\ell$, have to be stored. 
Since it may exist at most $\ell$ occurrences of $\sigma$ in each sequence $sid$,
storing any $\mathcal{ALLOCC}_j$ costs in the worst case $O(m\times \ell)$. Since we can have $\ell$ prefixes,
the worst space complexity of storing all the occurrences $\mathcal{ALLOCC}_j (j=1..\ell)$, is $O(m\times \ell^2)$. 
$\Box$


\vspace*{-.25cm}
\section{Experiments}
\label{section:experimentations}

\setlength{\textfloatsep}{5pt}

\begin{table}[t] \centering
\scalebox{0.8}{
\begin{tabular}{|l|c|c|c|c|c|}
\hline
dataset & $\card\SDB$ & $\card\I$ & avg $(\card s)$ & $\max_{s\in \SDB}$ $(\card s)$ & type of data\\
\hline
Leviathan & 5834 & 9025 & 33.81 & 100 & book\\
\hline
PubMed & 17527 & 19931 &  29 & 198 & bio-medical text\\
\hline
FIFA & 20450 & 2990 & 34.74 & 100 & web click stream \\
\hline
BIBLE & 36369 & 13905 & 21.64 & 100 & bible \\
\hline
Kosarak & 69999 & 21144 & 7.97 & 796   & web click stream\\ 
\hline
Protein & 103120 &  24 &  482 & 600 & protein sequences \\
\hline
\end{tabular}
}
\caption{Dataset Characteristics.} 
\label{table:CharData}
\end{table}

This section reports experiments on several real-life datasets~\cite{JMLR:v15:fournierviger14a,BCCC2012cbms} 
of large size having varied characteristics and representing different application domains (see Tab.~\ref{table:CharData}). 
First, we compare our approach with CP methods
and with the state-of-the-art specialized method \cspade in terms of scalability.
Second, we show the flexibility of our approach for handling different types of constraints simultaneously. 

\begin{figure}[t]
\begin{center}
\vspace*{-.45cm}
\begin{tabular}{c c c}s
FIFA  & & LEVIATHAN  \\
\includegraphics[width=4.6cm, height=3cm]{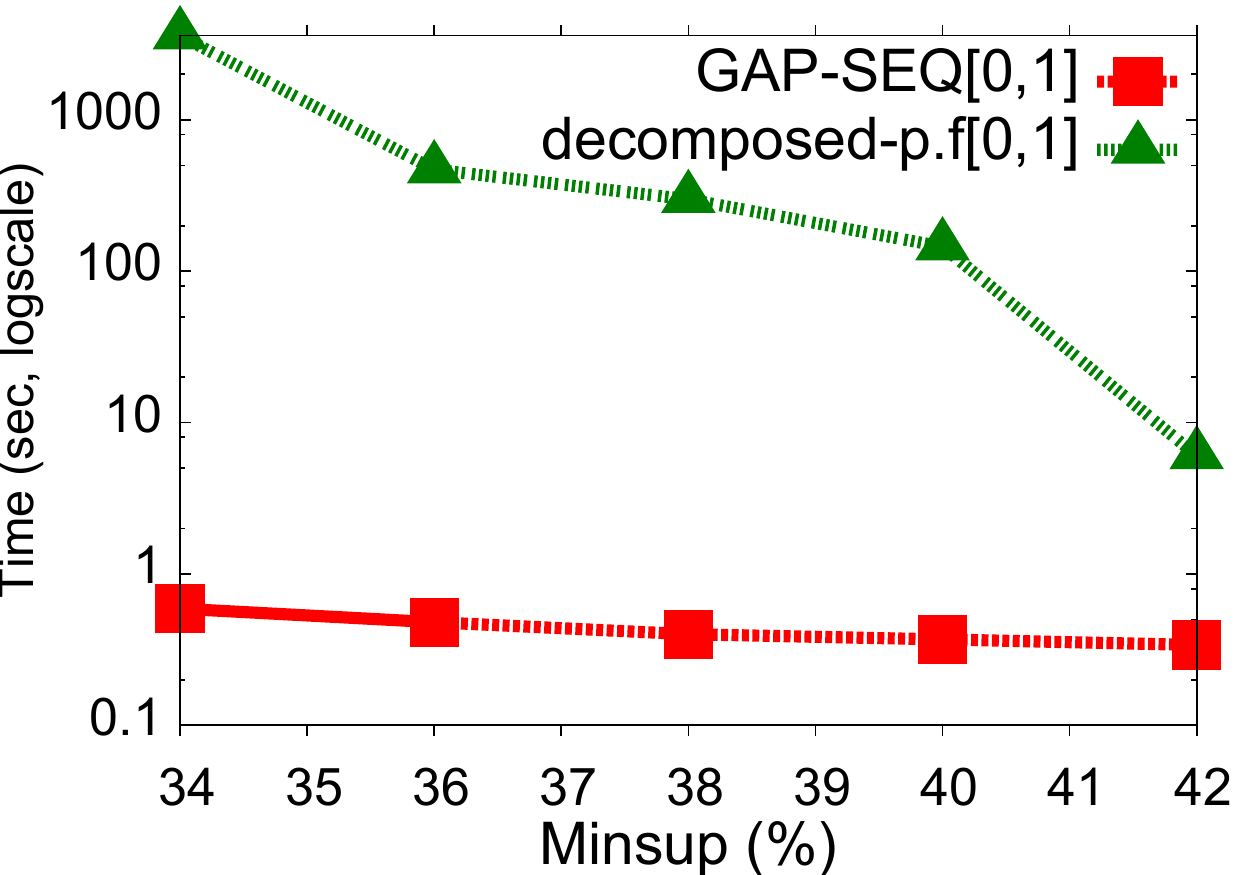}   
&
\hspace{1cm}
&
\includegraphics[width=4.6cm, height=3cm]{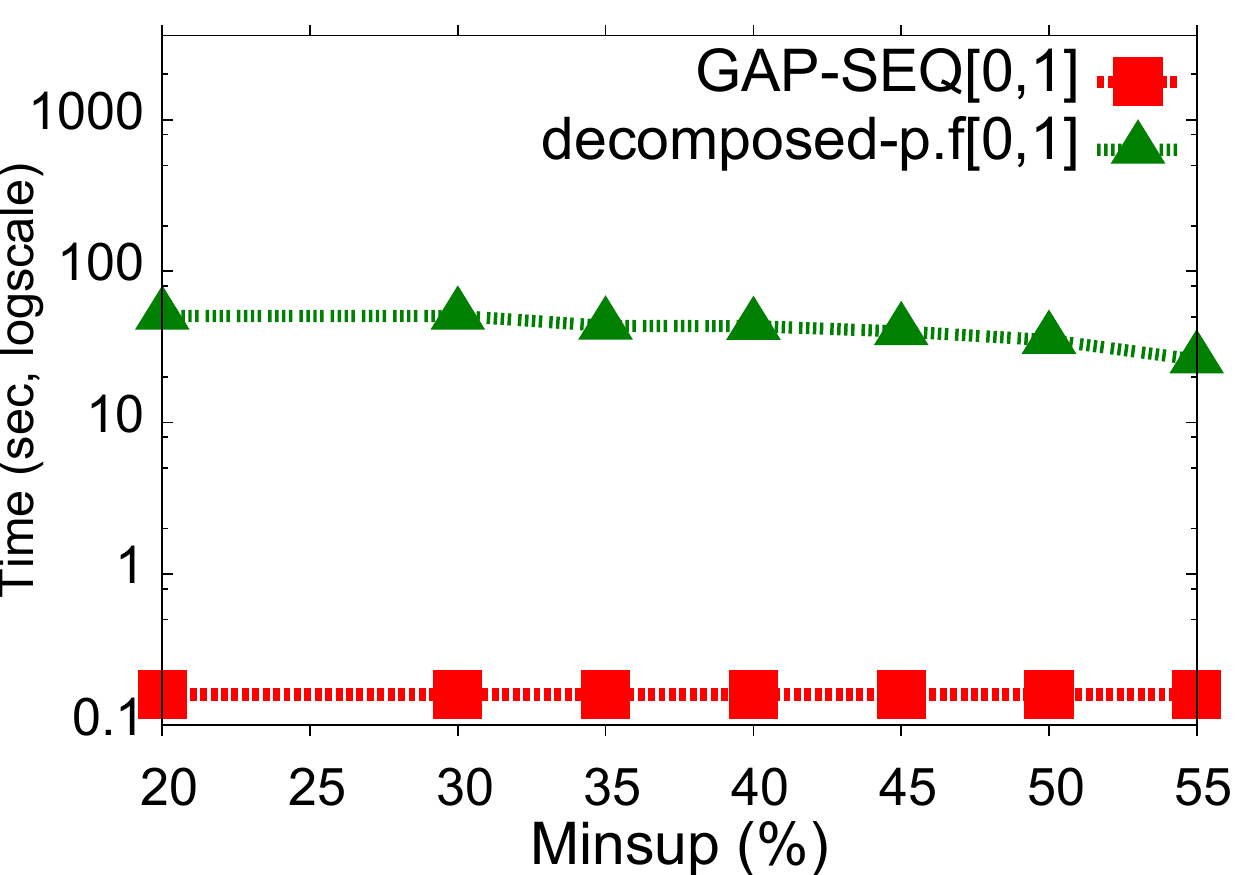}   
\vspace*{-.45cm}
\end{tabular}
\end{center}
\caption{\label{fig:GAP-CPSMD-1-2} Comparing \gapcp with \cps for
  GSPM: CPU times.} 
\end{figure}

\begin{table*}[t] \centering
\scalebox{0.7}{
\begin{tabular}{|l|l|l|r|r|r|r|r|r|}
\hline
\multirow{2}{*}{Dataset} & \multirow{2}{*}{$minsup$ (\%)} &
\multirow{2}{*}{\#PATTERNS} & \multicolumn{2}{c|}{CPU times (s)} & \multicolumn{2}{c|}{\#PROPAGATIONS} & \multicolumn{2}{c|}{\#NODES}\\
\cline{4-9}
&  &  & \gapcp & \cps & \gapcp & \cps & \gapcp & \cps \\
\hline
\multirow{6}{*}{FIFA} & 42 & 1  & 0.34 & 6.06 & 2 & 0  & 1 & 2\\
& 40 & 5 &  0.37 & 144.95 & 10 & 778010  & 6 & 11 \\ 
& 38 & 10 & 0.4 & 298.68 & 20 & 2957965  & 11 & 21\\ 
& 36 & 17 & 0.48 & 469.3  & 34 & 9029578  & 18 & 35\\ 
& 34 & 35 	& 0.59 	&  $-$   & 70 & $-$  & 36 & $-$ \\ 
\hline
\end{tabular}
}
\caption{\gapcp vs. \cps on FIFA dataset.} 
\label{table:GAP-PP:stat}
\end{table*}

\medskip
\noindent
\textbf{Experimental protocol.} 
Our approach was carried out using the {\tt gecode} solver\footnote{\url{http://www.gecode.org}}. 
All experiments were conducted on a processor Intel X5670 with 24 GB of memory.  
A time limit of 1 hour has been set. 
If an approach is not able to complete the extraction within the time limit, it will be reported as ($-$).
$\ell$ was set to the length of the longest sequence of $\SDB$. 
We compare our approach (indicated by \gapcp) with: 
\begin{enumerate}
\item
\cps\footnote{\footnotesize \url{https://dtai.cs.kuleuven.be/CP4IM/cpsm/}}, the most efficient CP methods for GSPM,
\item
\cspade\footnote{\url{http://www.cs.rpi.edu/~zaki/www-new/pmwiki.php/Software/}}, the state-of-the-art specialized method for GSPM,
\item
the \pp global constraint for SPM. 
\end{enumerate}

\smallskip
\noindent
\textbf{(a) GSPM: \gapcp vs the most efficient CP method.}
We compare CPU times for \gapcp and \texttt{deco\-m\-po\-sed-p.f}. 
In the experiments, we used the gap constraint $\gapMN{0}{1}$ and various values of $minsup$.  
Fig.~\ref{fig:GAP-CPSMD-1-2}
shows the results for the two datasets FIFA and LEVIATHAN (results are
similar for other datasets and not reported due to page limitation).
\gapcp clearly outperforms \cps on the two datasets
even for high values of $minsup$: \gapcp is more than an order of
magnitude faster than \cps. For low values of $minsup$, \cps fails to
complete the extraction within the time limit. 

Tab.~\ref{table:GAP-PP:stat} reports for the FIFA dataset and 
different values of $minsup$, the number of calls to the propagate
function of {\tt gecode} (col. $5$) and the number of nodes of the search tree (col. $6$).  
\gapcp is very effective in
terms of number of propagations. For \gapcp, the number of propagations
remains very small compared to \cps (millions). This is due to the
huge number of reified constraints used by \cps to encode the
subsequence relation. Regarding CPU times, \gapcp requires less than
1s. to complete the extraction, while \cps needs much more time to
end the extraction (speed-up value up to $938$).  

\begin{center}
\begin{figure*}[t!]
\vspace*{-.2cm}
\begin{tabular}{ccc}
BIBLE ($0.1\%$) & Kosarak ($0.1\%$) & PubMed ($0.5\%$)   \\
\includegraphics[width=4cm, height=2.9cm]{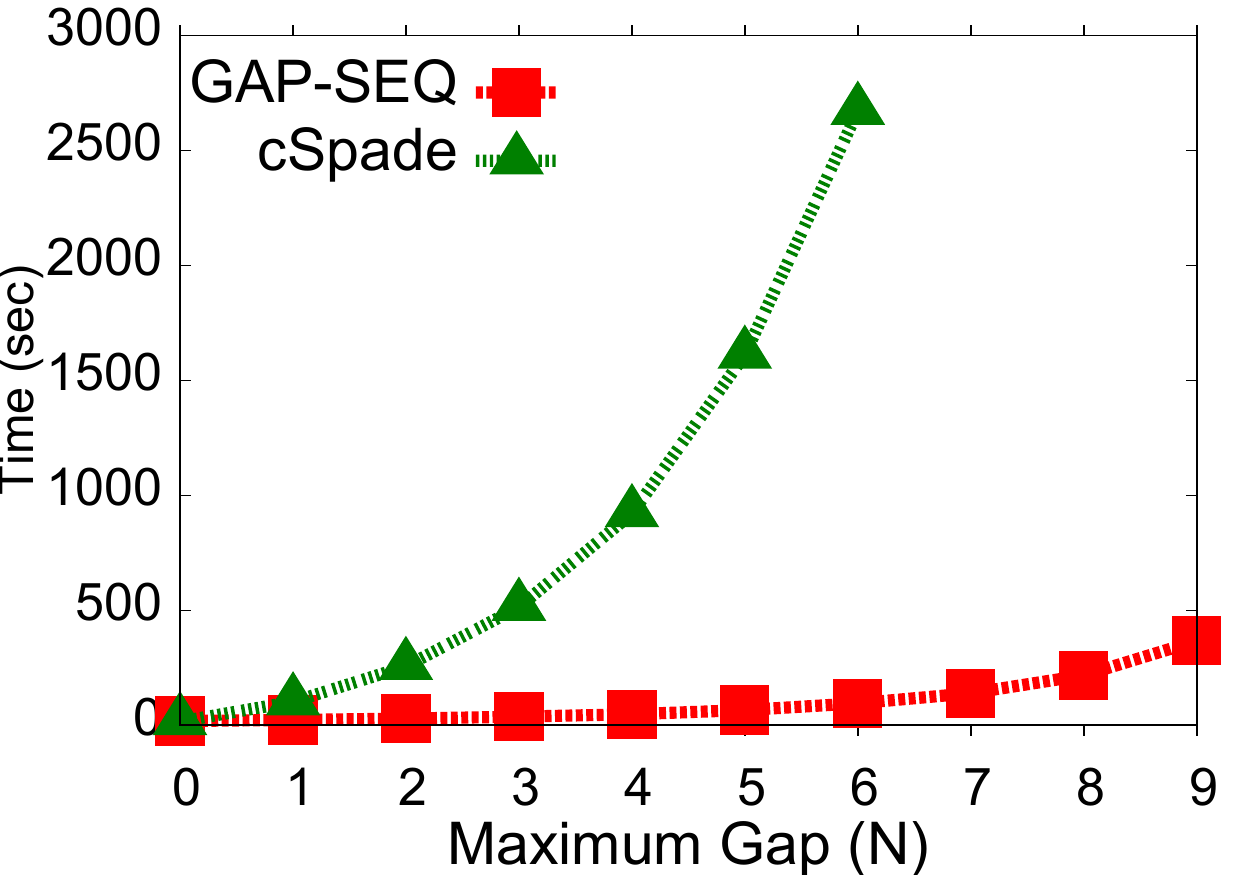}    
&   
\includegraphics[width=4cm, height=2.9cm]{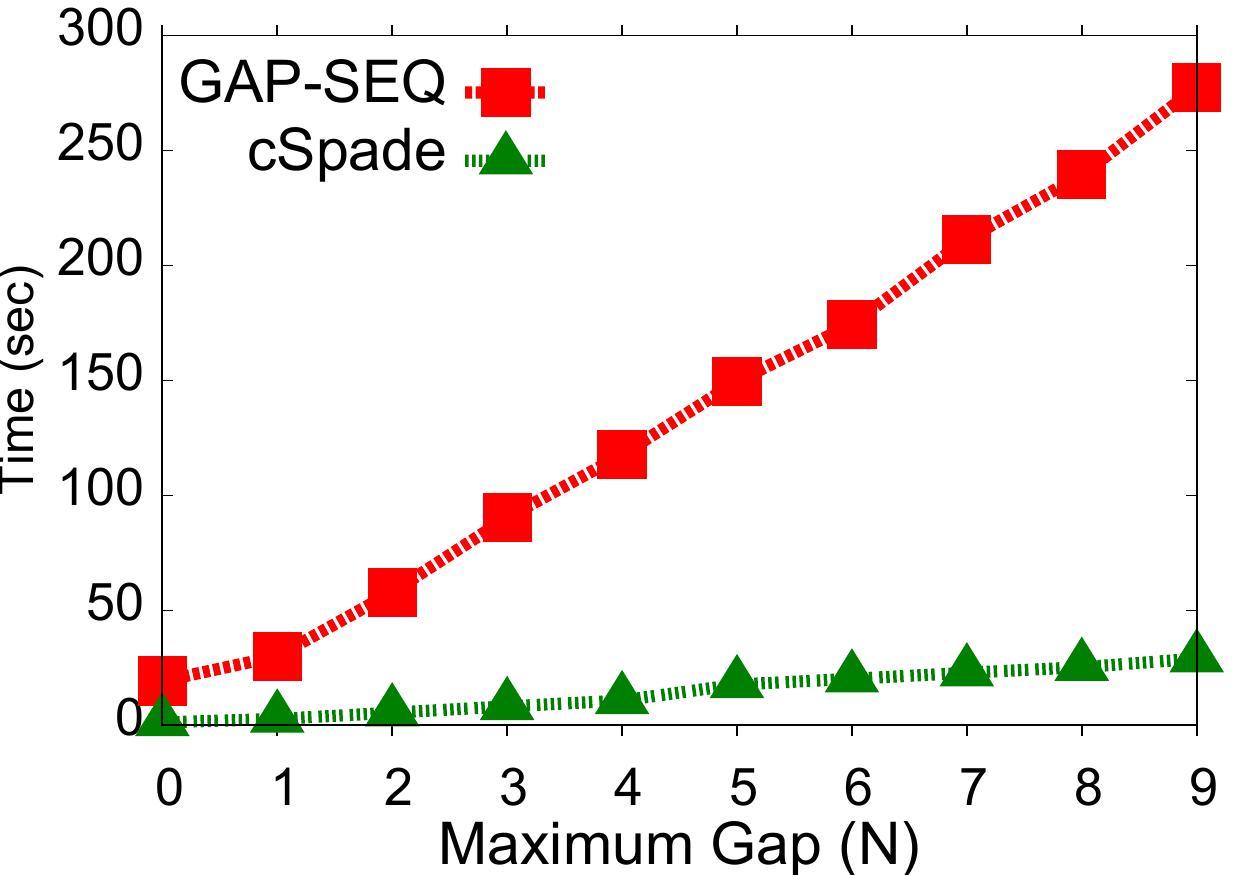}
&
\includegraphics[width=4cm, height=2.9cm]{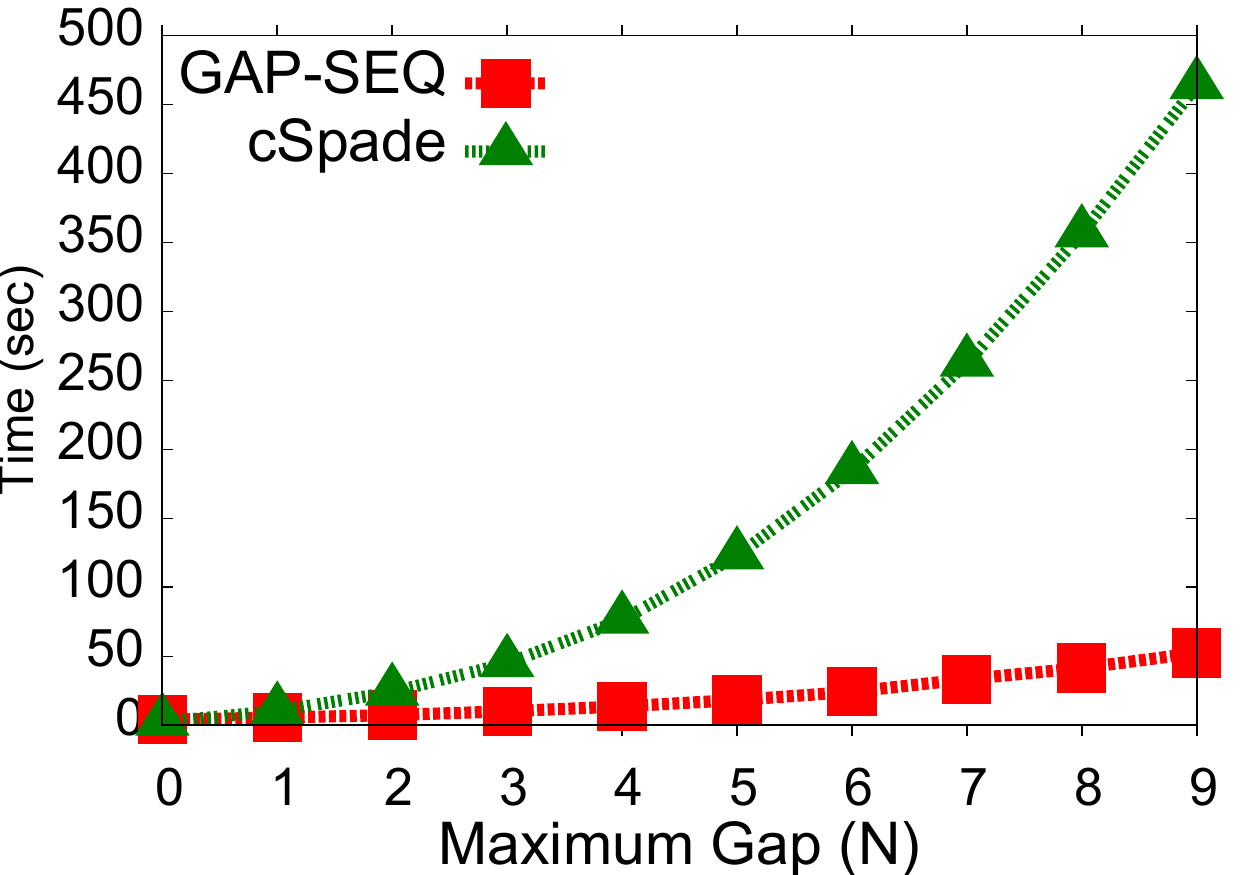}
\\
FIFA ($2\%$) & Leviathan ($0.8\%$) & Protein ($96\%$) \\
\includegraphics[width=4cm, height=2.9cm]{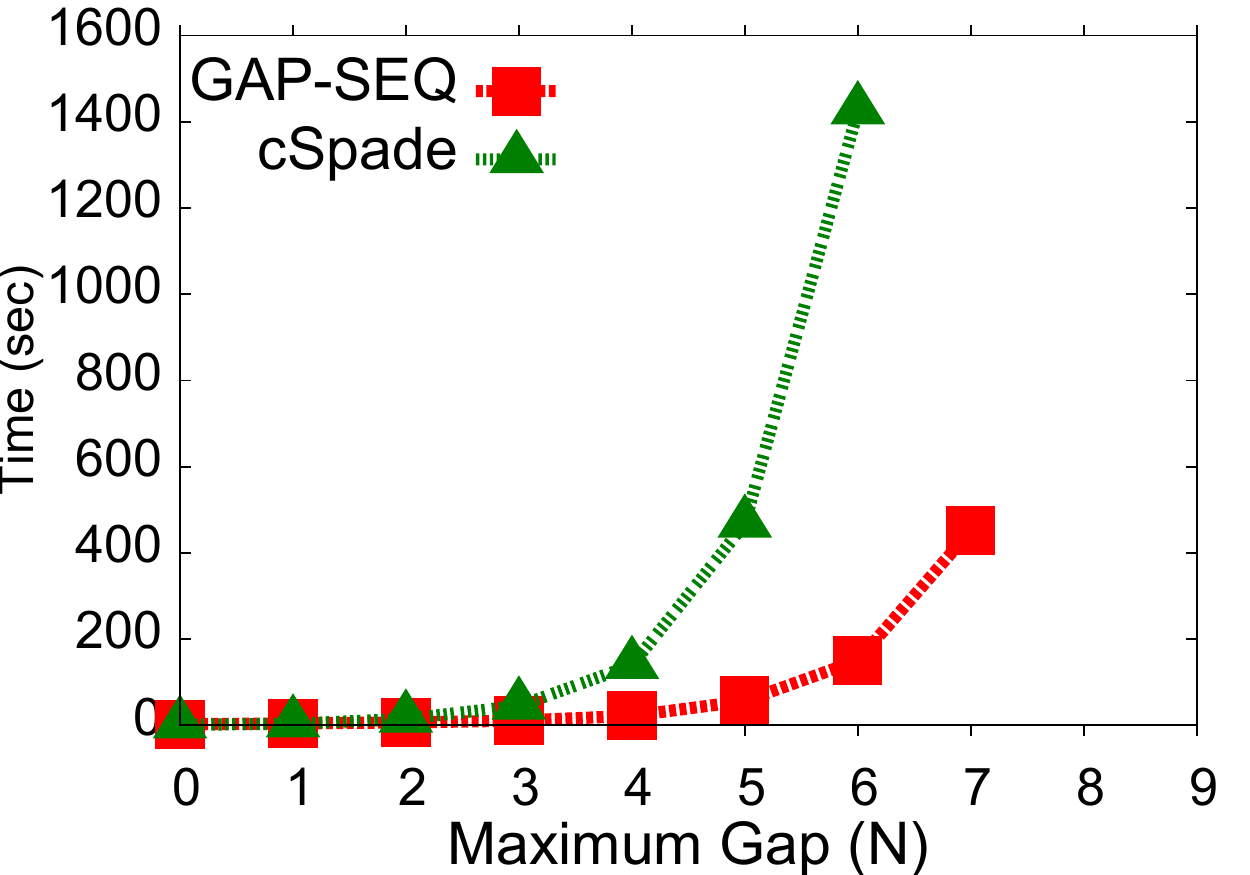}
&
\includegraphics[width=4cm, height=2.9cm]{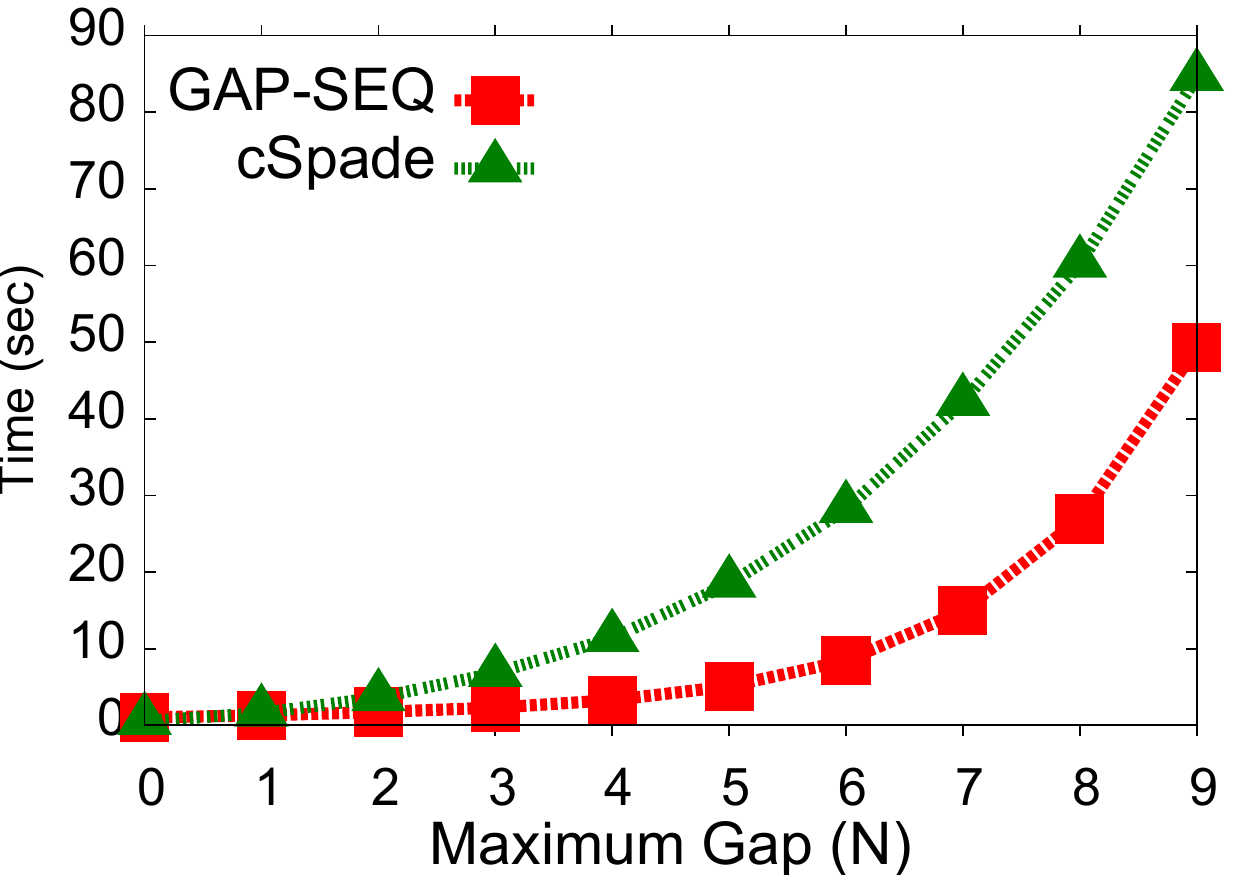}
&
\includegraphics[width=4cm, height=2.9cm]{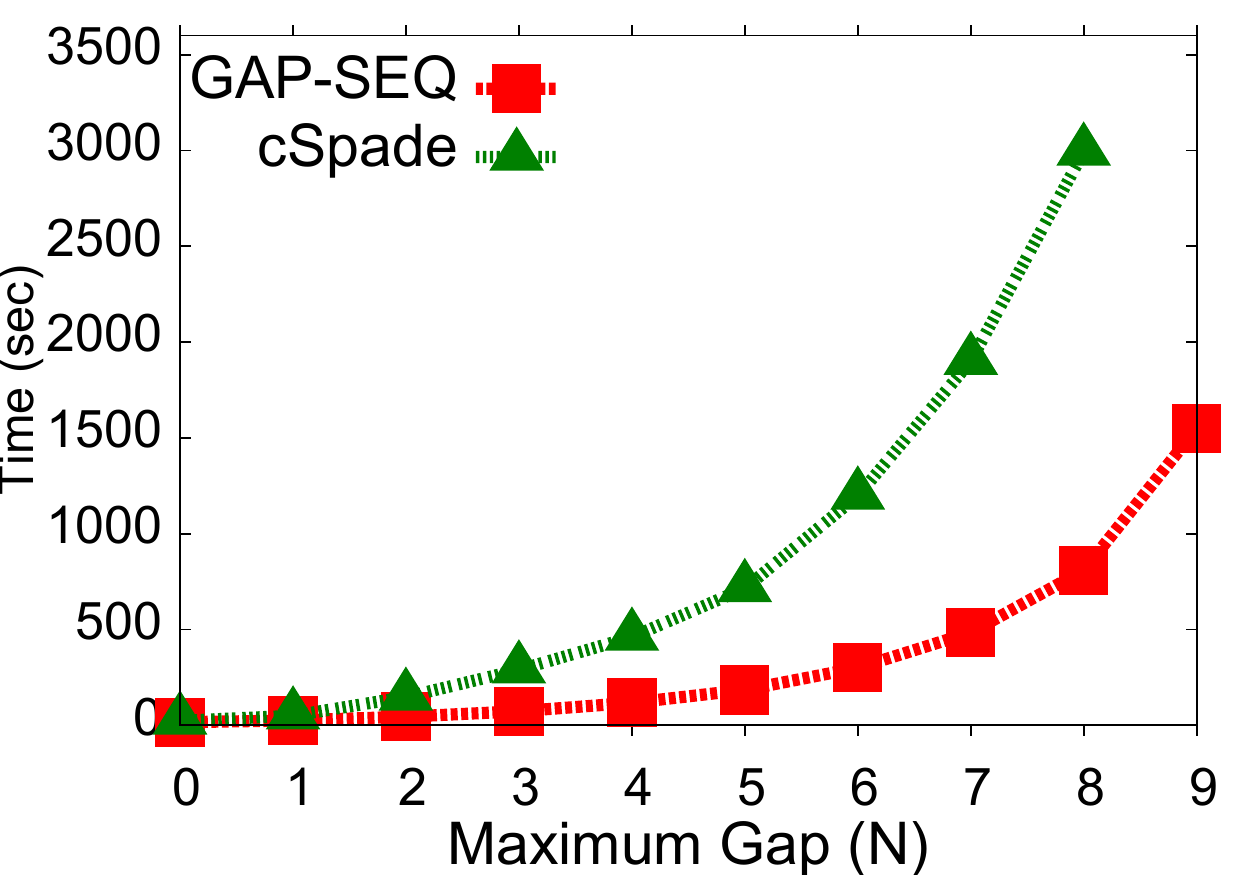}
\vspace*{-.35cm}
\end{tabular}
\caption{\label{fig:GAP-MAX} Varying the value of parameter $N$
  in the gap constraint $(M = 0)$: CPU times.} 
\end{figure*}
\end{center}
\begin{figure*}[t!]
\vspace*{-.2cm}
\begin{tabular}{ccc}
BIBLE  & Kosarak & PubMed   \\
\includegraphics[width=4cm, height=2.9cm]{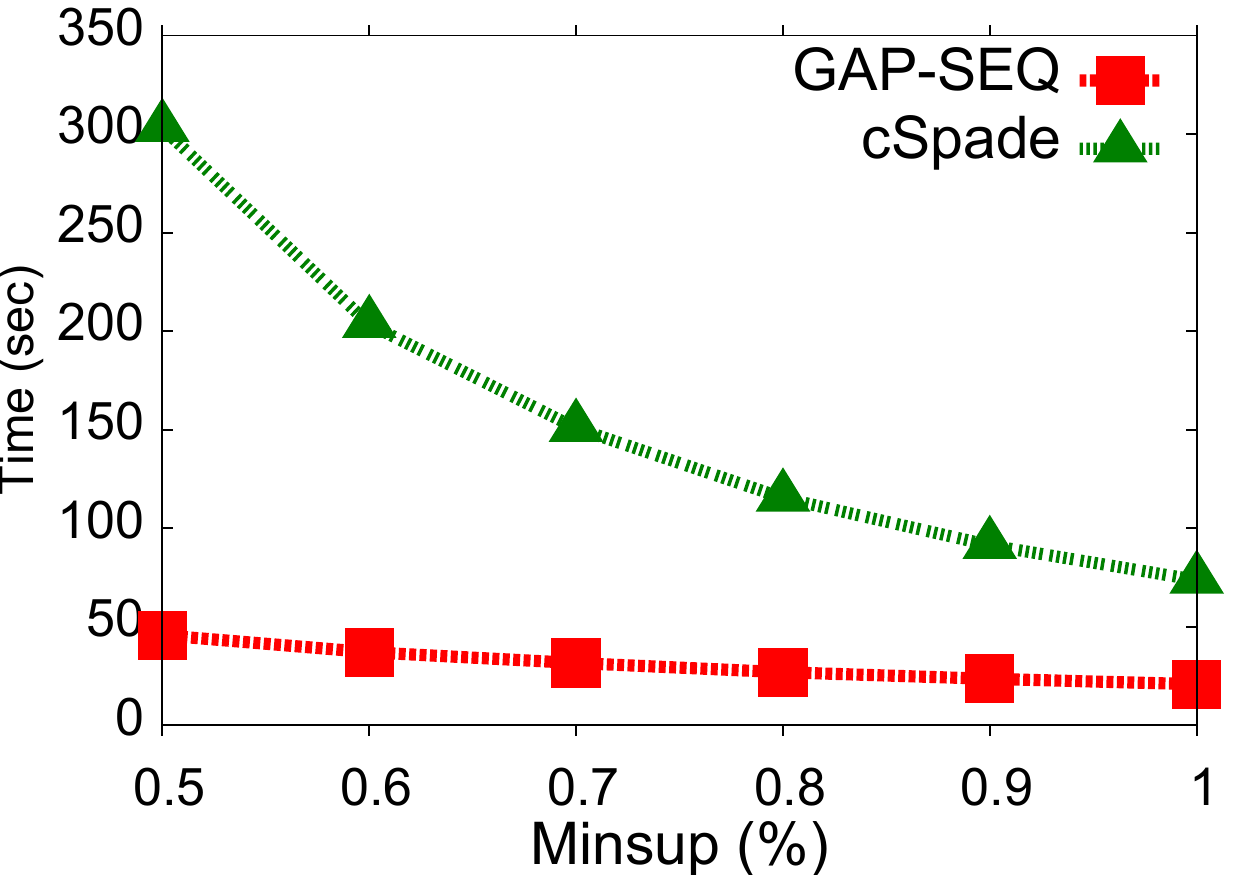}    
&   
\includegraphics[width=4cm, height=2.9cm]{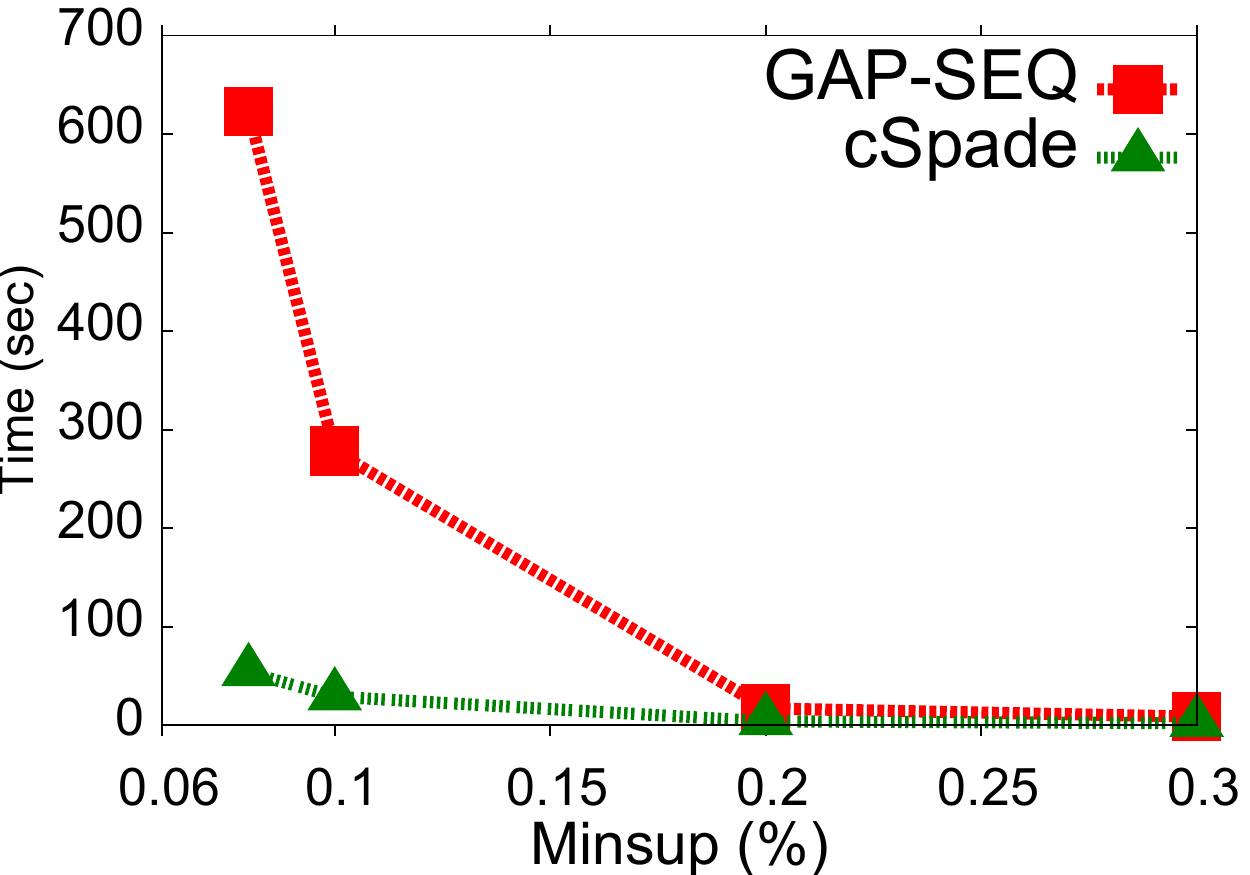}
&
\includegraphics[width=4cm, height=2.9cm]{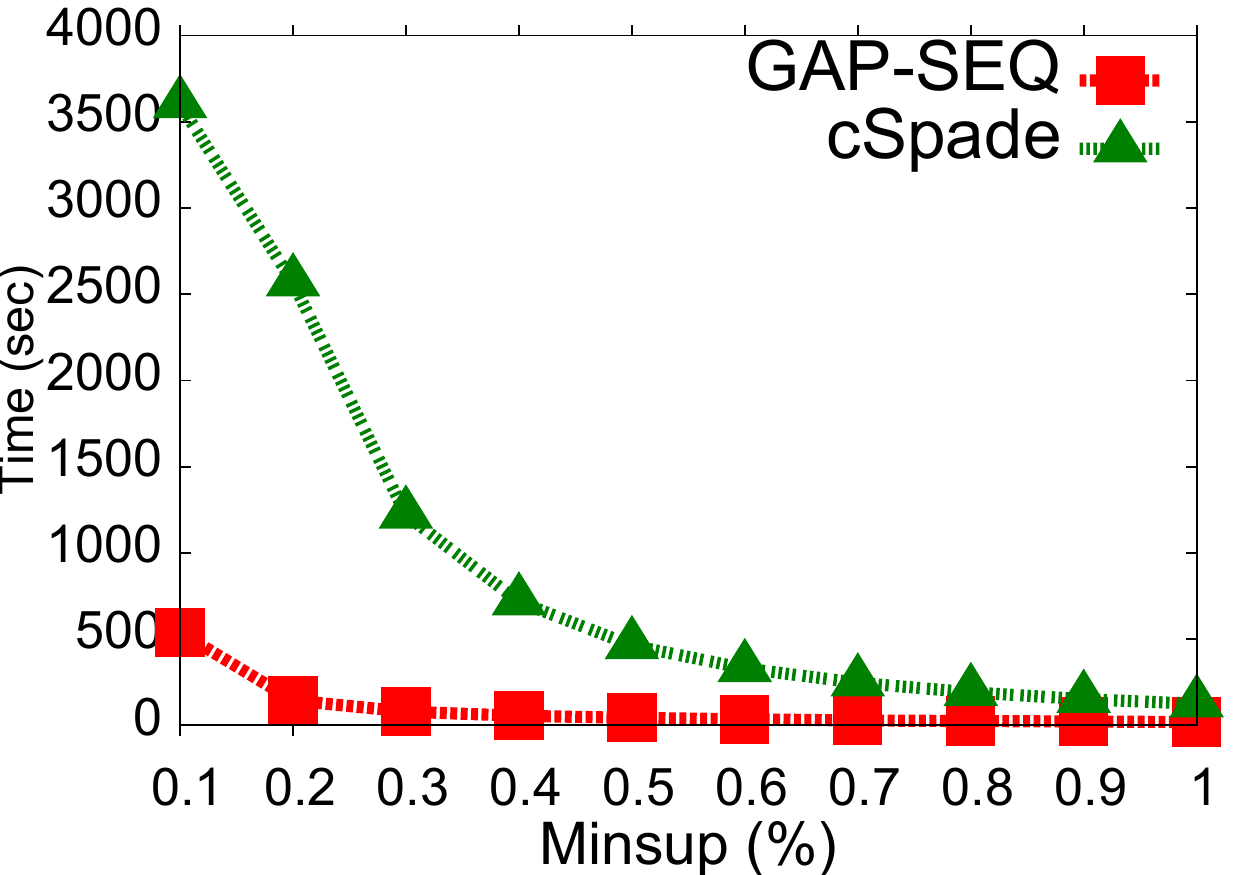}
\\
FIFA  & Leviathan & Protein  \\
\includegraphics[width=4cm, height=2.9cm]{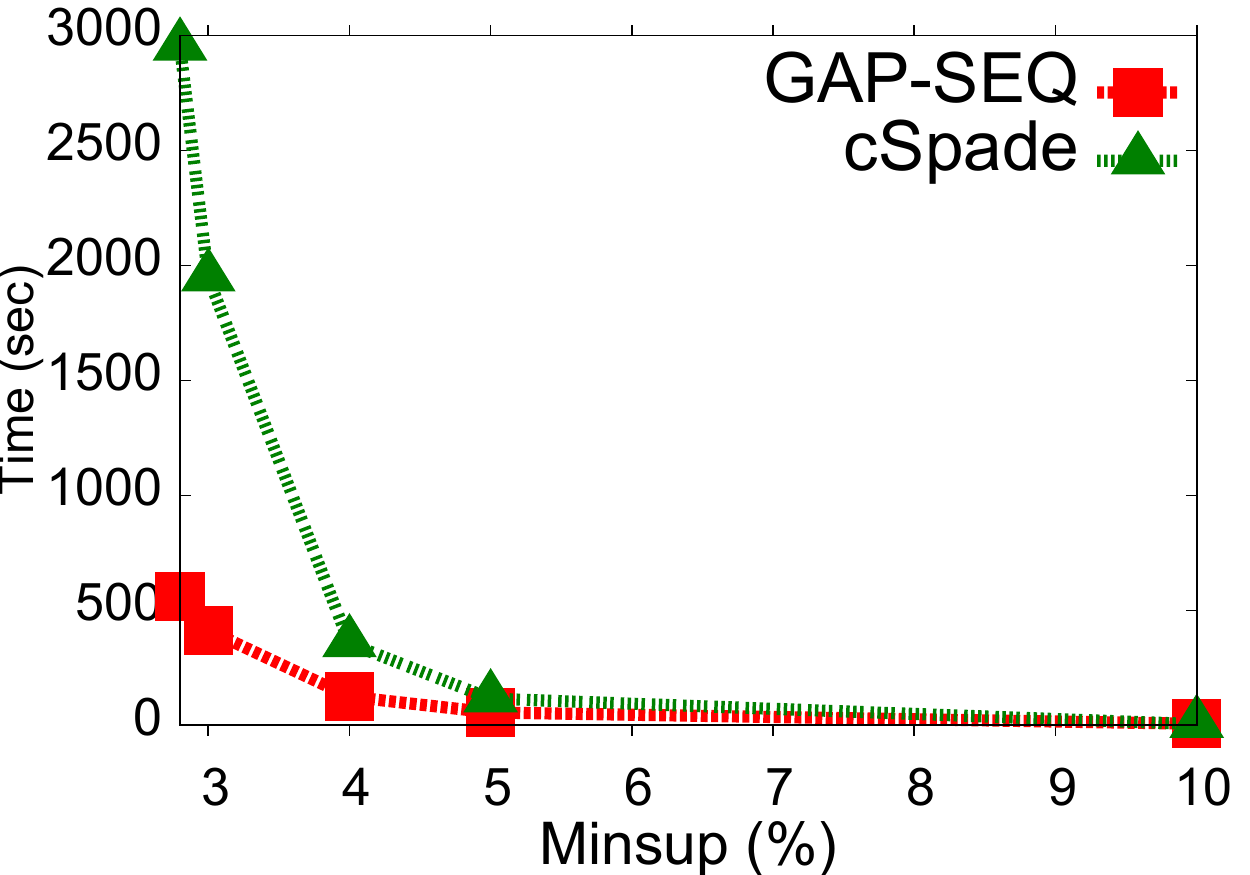}
&
\includegraphics[width=4cm, height=2.9cm]{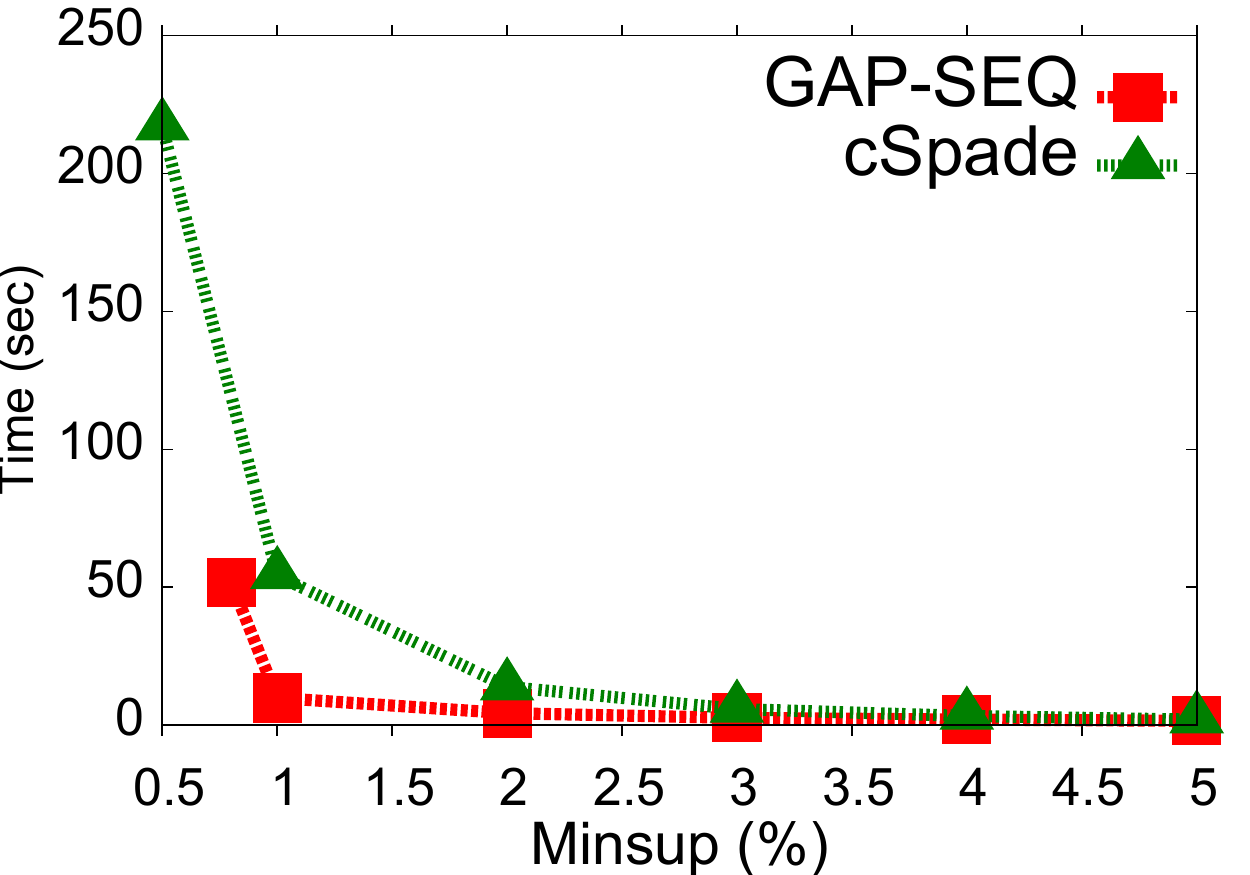}
&
\includegraphics[width=4cm, height=2.9cm]{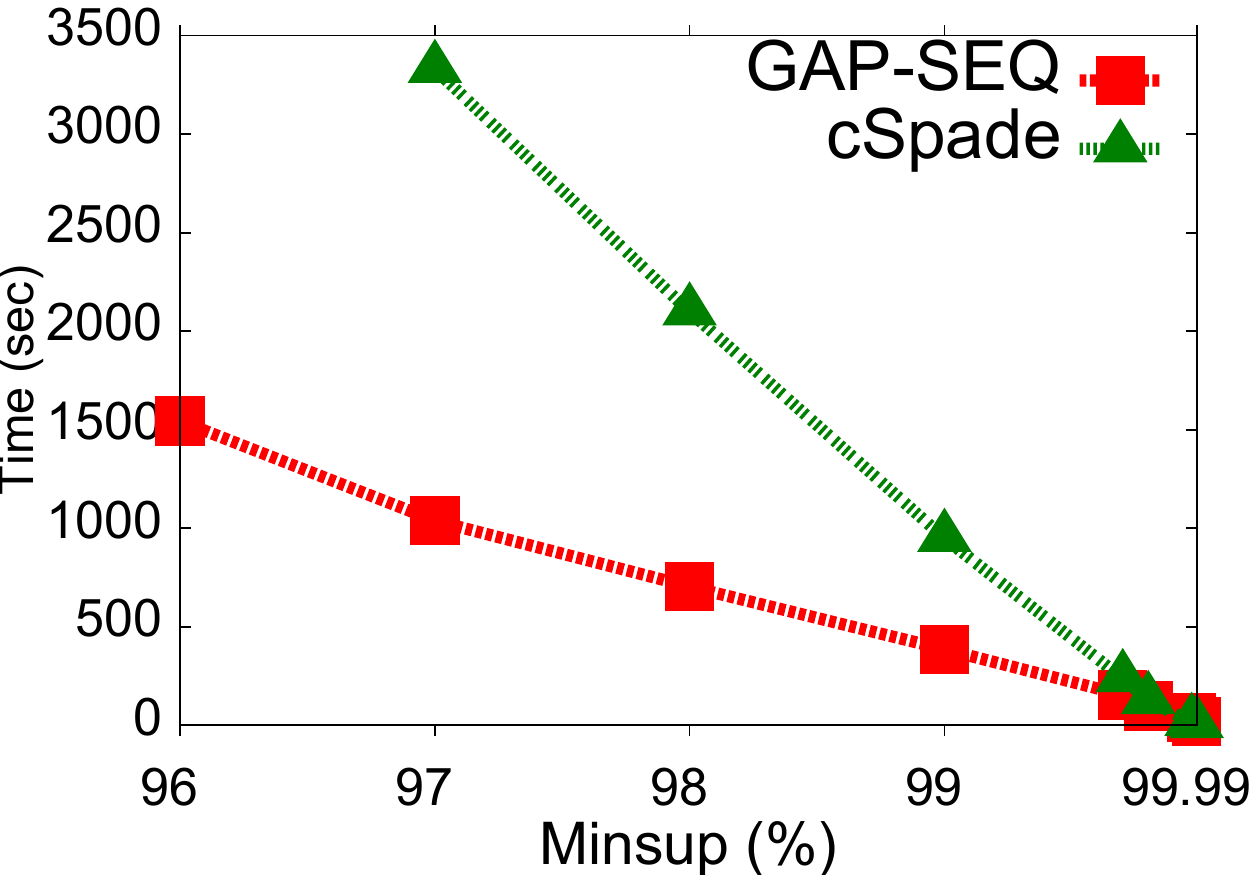}
\vspace*{-.35cm}
\end{tabular}
\caption{\label{fig:GAP-MINSUP} Varying the value of $minsup$
  with the gap constraint $\gapMN{0}{9}$: CPU times.} 
\end{figure*}

\vspace*{-.6cm}
\noindent
\textbf{(b) GSPM: \gapcp vs the state-of-the-art specialized method.}
Second experiments compare \gapcp with \cspade. 
We first fixed $minsup$ to the smallest possible value w.r.t. the
dataset used, and varied the maximum gap $N$ from $0$ to $9$. 
The minimum gap $M$ was set to $0$. 
Fig.~\ref{fig:GAP-MAX} reports the CPU times of both methods. 
First, \gapcp clearly dominates \cspade on all the datasets. 
The gains in terms of CPU times are greatly amplified as the value of $N$ increases.
On FIFA, the speed-up is $9.5$ for $N$$=$$6$. 
On BIBLE, \gapcp is able to complete the extraction for values of $N$
up to $9$ in 433 seconds,  
while \cspade failed to complete the extraction for $N$ greater than
$6$. The only exception is for the Kosarak dataset, where \cspade is
efficient. For this dataset (which is the largest one both in terms
of number of sequences and items), the size of the domains is
important as compared to the other datasets. So, filtering takes much
more time. This probably explains the behavior of \gapcp on this
dataset. 

We also conducted experiments to evaluate how sensitive \gapcp and \cspade are to $minsup$. 
We used the $\gapMN{0}{9}$ constraint, while $minsup$ varied until the two methods
were not able to complete the extraction within the time limit.  
Results are depicted in Fig.~\ref{fig:GAP-MINSUP}. 
Once again, \gapcp obtains the best performance on all datasets (except for Kosarak). 
When the minimum support decreases, CPU times for \gapcp increase reasonably 
while for \cspade they increase dramatically. On PubMed, with $minsup$ set to
$0.1\%$, \cspade finished the extraction after $3,500$ seconds, 
while \gapcp only used $500$ seconds
(speed-up value $7$). These results clearly demonstrate
that our approach is very effective as compared to \cspade on large
datasets. 

\begin{figure*}[t]
\begin{tabular}{ccc}
BIBLE  & Kosarak & Protein   \\
\includegraphics[width=4.08cm, height=3cm]{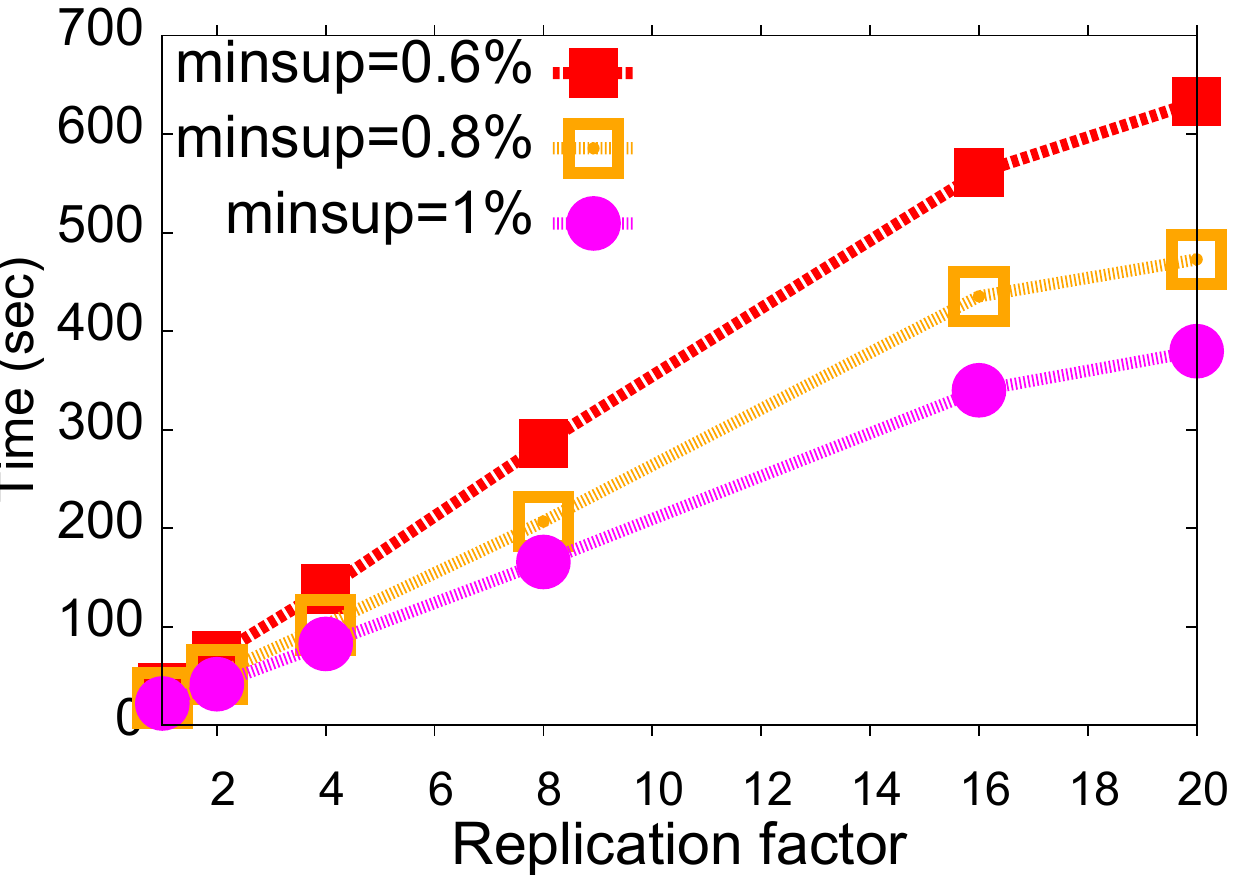}    
&   
\includegraphics[width=4.08cm, height=3cm]{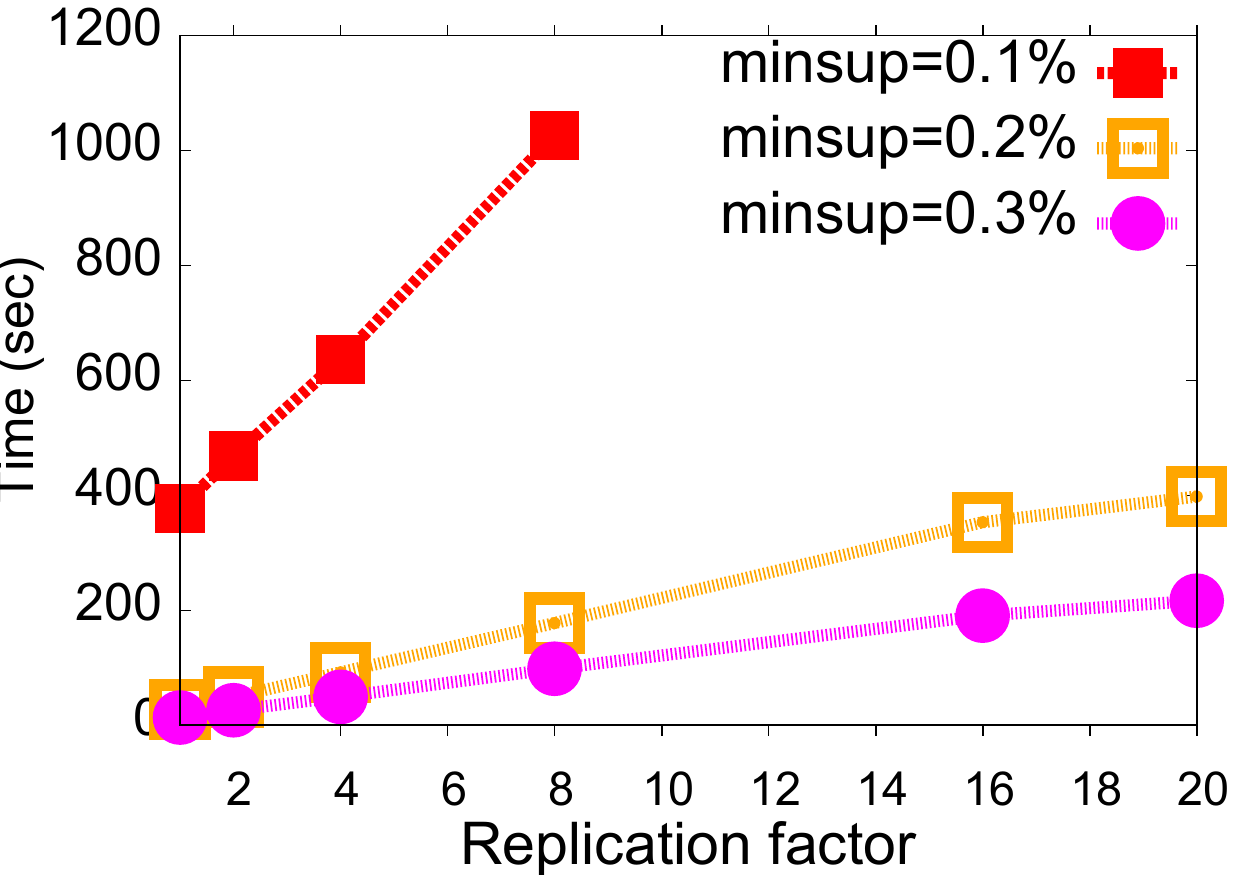}
&
\includegraphics[width=4.08cm, height=3cm]{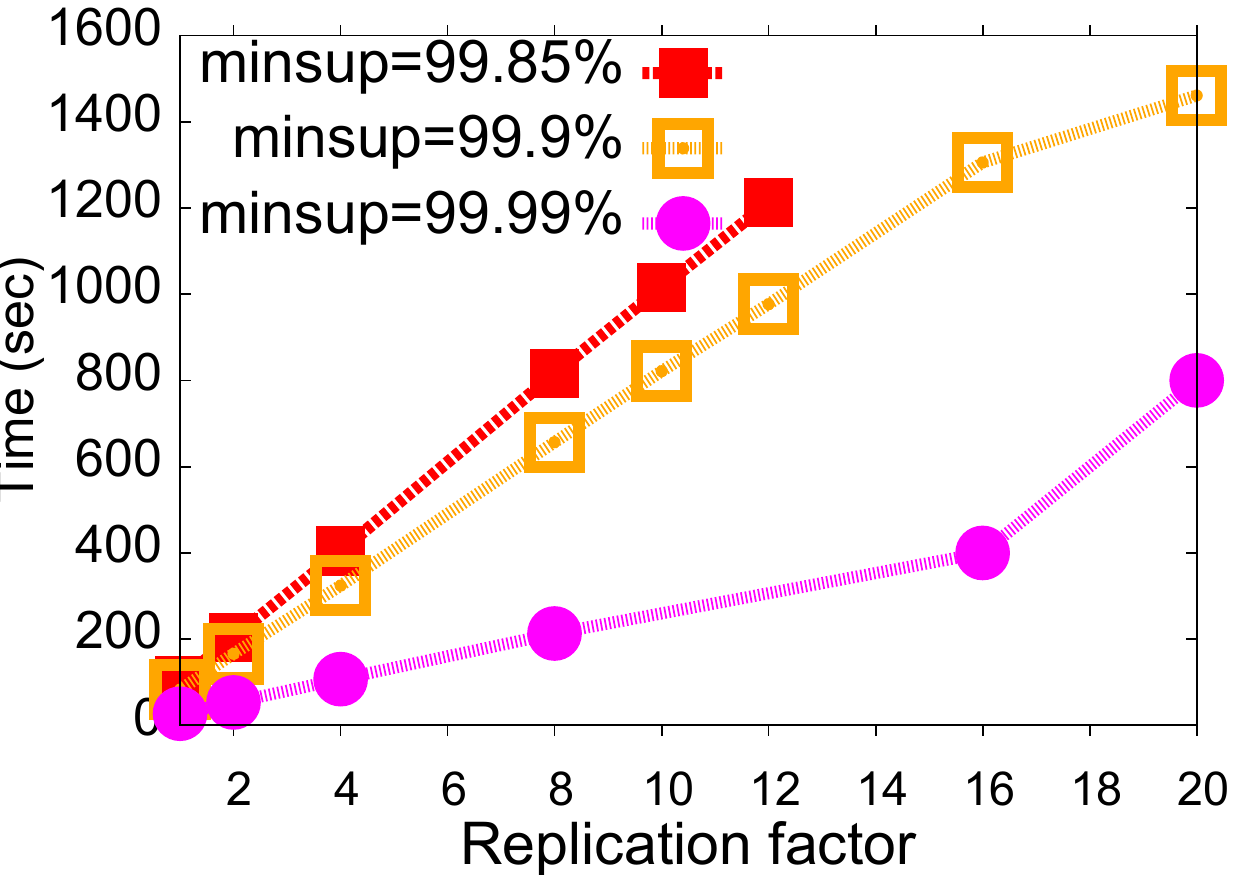}
\vspace*{-.35cm}
\end{tabular}
\caption{\small \label{fig:GAP:scalability} Scalability of \gapcp global
  constraint on BIBLE, Kosarak and Protein.}  
\end{figure*}

\medskip
\noindent
\textbf{(c) GSPM: evaluating the scalability of \gapcp.} 
We used three datasets and replicated them from $1$ to $20$ times.  
The gap constraint was set to $\gapMN{0}{9}$, and $minsup$ to three different values. 
Fig.~\ref{fig:GAP:scalability} reports the CPU times according to the replication factor (i.e. dataset sizes). 
%
CPU times increase (almost) linearly as the number of sequences. 
This indicates that \gapcp achieves scalability while it is a
major issue for CP approaches. The behavior of \gapcp on Protein
is quite different for low values of $minsup$. 
Indeed, for large sequences (such as in Protein), the size of
$\mathcal{ALLOCC}$ may be very 
large and thus checking the gap constraint becomes costly (see
Sect.~\ref{complexities}). 

\medskip
\noindent
\textbf{(d) GSPM: handling various additional constraints.} 
To illustrate the flexibility of our approach, 
we selected the PubMed dataset and stated additional constraints such as 
minimum frequency, minimum size, and other useful constraints
expressing some linguistic knowledge as membership.  
The goal is to extract sequential patterns which convey linguistic
regularities (e.g., gene - rare disease
relationships)~\cite{BCCC2012cbms}.  
The size constraint allows to forbid patterns that are too small
w.r.t. the number of items (number of words) to be relevant patterns; 
we set $\ell_{min}$ to 3.  
The  membership constraint enables to filter out sequential patterns
that do not contain some selected items.  
For example, we state that extracted patterns must contain at least
the two items GENE and DISEASE.  
We used the $\gapMN{0}{9}$ constraint, which is the best setting found 
in~\cite{BCCC2012cbms}.  
As no specialized method exists for this combination of constraints, 
we thus compare \gapcp with and without additional constraints.  

Table~\ref{table:const:stat} reports, for each value of $minsup$,
the number of patterns extracted and the associated CPU times, 
the number of propagations and the number of nodes in the search tree. 
Additional constraints obviously restrict the number of extracted
patterns. 
As the problem is more constrained, the size of the developed search tree is smaller.
Even if the number of propagations is higher, the resulting CPU times are smaller.
To conclude, thanks to the \gapcp global constraint and its encoding,
additional constraints like size, membership and regular expressions
  constraints can be easily stated.

\begin{table*}[t!] \centering
\scalebox{0.9}{
\begin{tabular}{|r|r|r|r|r|r|r|r|r|}
\hline
 \multirow{2}{*}{$minsup$} &
\multicolumn{2}{c|}{\#PATTERNS} & \multicolumn{2}{c|}{CPU times (s)} & \multicolumn{2}{c|}{\#PROPAGATIONS} & \multicolumn{2}{c|}{\#NODES} \\
\cline{2-9}
& gap & gap+size+item & gap & gap+size+item & gap & gap+size+item & gap & gap+size+item\\ 
\hline
1 \% & 14032 & 1805 & 19.34 & 16.83    & 28862 &  47042& 17580 &16584 \\
0.5 \% & 48990 & 6659 &  43.46 &34.6  & 100736 &163205   &  61149 & 58625 \\ 
0.4 \% &  72228 & 10132  & 55.66 & 43.47  & 148597  &240337  & 90477 & 87206 \\ 
0.3 \% &119965 & 17383 & 79.88 & 59.28 &  246934 &398626  & 151280  & 146601 \\ 
0.2 \% & 259760& 39140 	&143.91  	&  100.09   & 534816 &861599   &329185 & 321304 \\ 
0.1 \% & 963053& 153411 	& 539.57 	&  379.04   &1986464  &3186519   &1236340  &1219193 \\ 
\hline
\end{tabular}
}
\caption{\gapcp under size and membership constraints on the PUBMED dataset.} 
\label{table:const:stat}
\end{table*}

\medskip
\noindent
\textbf{(e) Evaluating the ability of \gapCP to efficiently handle SPM.}
In order to simulate the absence of gap constraints, 
we used the ineffective $\gapMN{0}{\ell}$ constraint (recall that $\ell$ is the size of the longest sequence of $\SDB$).
We compared \gapPP{0}{\ell} with 
\pp 
and two configurations of \cspade for SPM: 
\cspade without gap constraint 
and \cspade with $M$ and $N$ set respectively to $0$ and $\ell$, denoted by \cspadeSG{0}{\ell}. 
Let us note that all the above methods will extract the same set of sequential patterns. 

Fig.~\ref{fig:GAP:SPM} reports the CPU times for the four methods. 
First, 
\cspade obtains the best performance (except on Protein). 
These results confirm those observed in~\cite{DBLP:conf/cp/KemmarLLBC15}. 
Second, 
\gapPP{0}{\ell} and \pp exhibit similar behavior, even if
\gapPP{0}{\ell} is slightly less faster. 
So, even if \gapcp handles both cases (with and without gap), it remains very competitive for SPM. 
Third,
\gapPP{0}{\ell} clearly outperforms \cspadeSG{0}{\ell} (except on Kosarak). 
This is probably due to the huge number of unnecessary joining operations performed by \cspadeSG{0}{\ell}.
To conclude, all the performed experiments demonstrate the ability of \gapCP to efficiently handle SPM.

\begin{figure*}[t!]
\begin{tabular}{ccc}
BIBLE  & Kosarak & Protein   \\
\includegraphics[width=4.08cm, height=3cm]{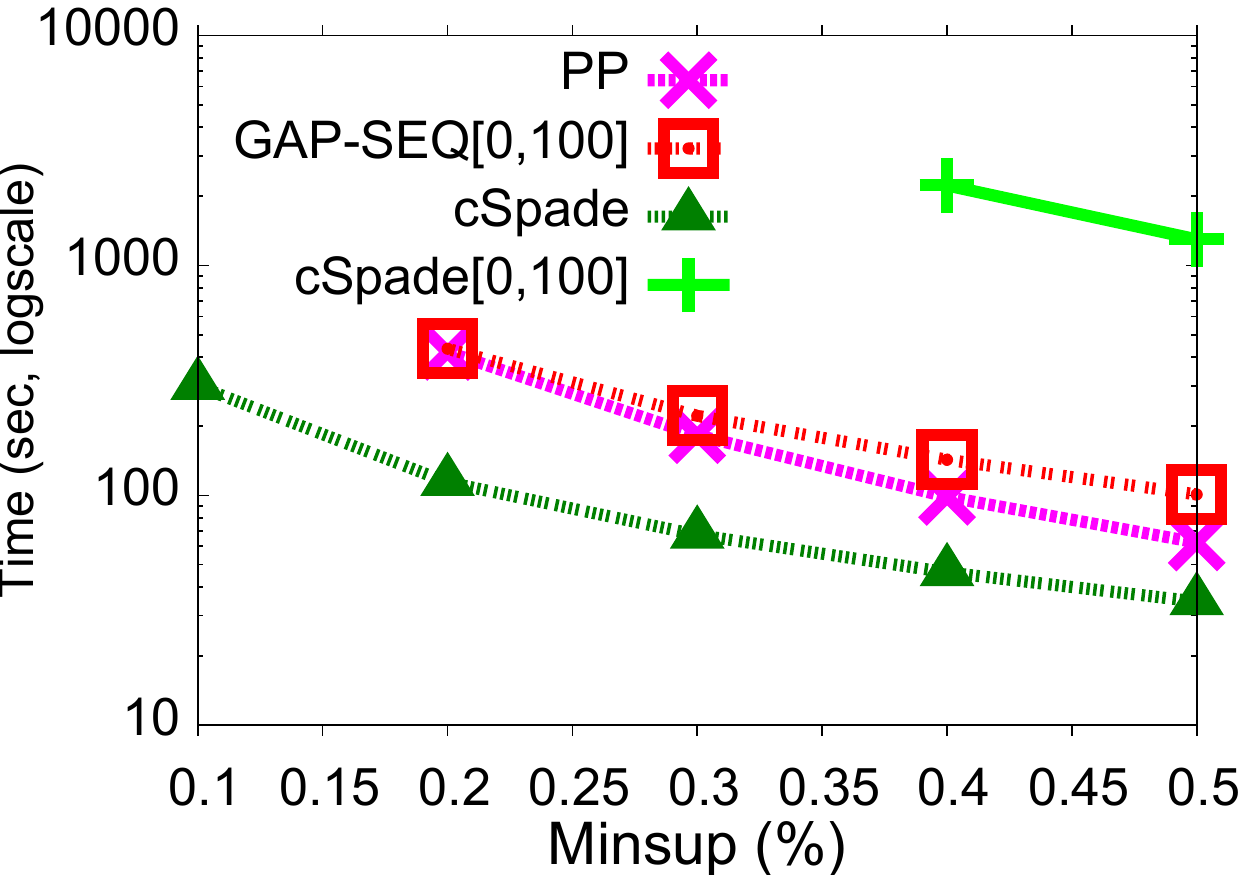}    
&   
\includegraphics[width=4.08cm, height=3cm]{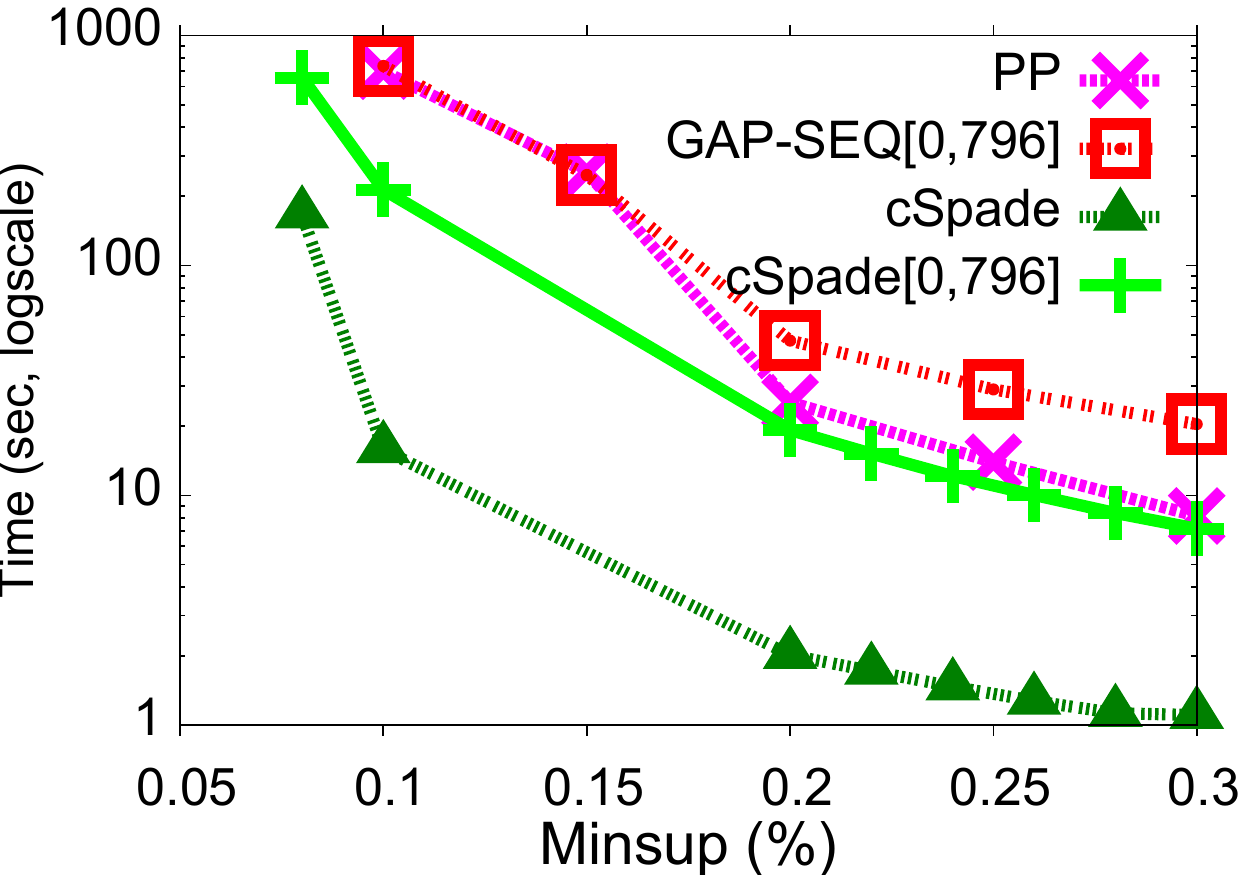}
&
\includegraphics[width=4.08cm, height=3cm]{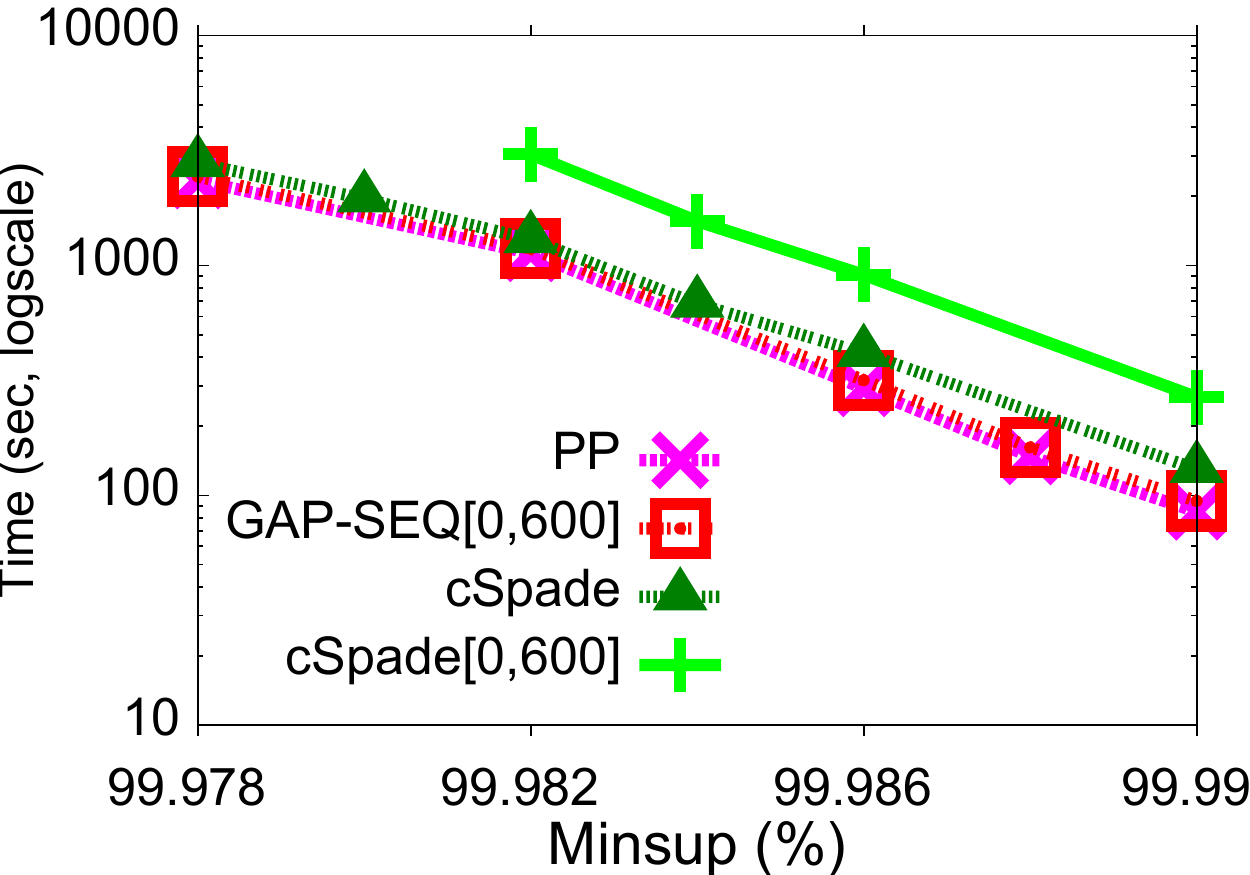}
\vspace*{-.35cm}
\end{tabular}
\caption{\small \label{fig:GAP:SPM} Comparing \gapCP with \pp and
  \cspade for SPM on BIBLE, Kosarak and Protein.}   
\end{figure*}

\medskip

Finally,
the {\tt gecode} implementation of \gapcp and the datasets used in our experiments 
are available online\footnote{\url{https://sites.google.com/site/prefixprojection4cp/}}.

\section{Conclusion}
\label{section:conclusion}
In this paper, we have introduced the global constraint \gapCP enabling to handle SPM with or without gap constraints.
The filtering algorithm takes benefits from the principle of right pattern extensions
and prefix anti-monotonicity property of the gap constraint.
 \gapCP enables to handle several types of constraints simultaneously 
and does not require any reified constraints nor any extra  variables to encode the subsequence relation.
Experiments performed on several real-life datasets 
(i) show that our approach clearly outperforms existing CP approaches as well as specialized methods for GSPM on large datasets,
and
(ii) demonstrate the ability of \gapCP to efficiently handle SPM.

This work opens several issues for future researches. 
We plan to handle constraints on set of sequential
patterns such as closedness, relevant subgroup and skypattern constraints. 

\bibliographystyle{splncs03}
\bibliography{seqcpaior15}

\begin{thebibliography}{10}
\providecommand{\url}[1]{\texttt{#1}}
\providecommand{\urlprefix}{URL }

\bibitem{DBLP:conf/icde/AgrawalS95}
Agrawal, R., Srikant, R.: Mining sequential patterns. In: Yu, P.S., Chen,
  A.L.P. (eds.) ICDE. pp. 3--14. IEEE Computer Society (1995)

\bibitem{Ayres:2002}
Ayres, J., Flannick, J., Gehrke, J., Yiu, T.: Sequential pattern mining using a
  bitmap representation. In: {KDD} 2002. pp. 429--435. ACM (2002)

\bibitem{BCCC2012cbms}
B{\'e}chet, N., Cellier, P., Charnois, T., Cr{\'e}milleux, B.: Sequential
  pattern mining to discover relations between genes and rare diseases. In:
  CBMS (2012)

\bibitem{DBLP:conf/ecai/CoqueryJSS12}
Coquery, E., Jabbour, S., Sa\"{\i}s, L., Salhi, Y.: A {SAT}-based approach for
  discovering frequent, closed and maximal patterns in a sequence. In: ECAI.
  pp. 258--263 (2012)

\bibitem{JMLR:v15:fournierviger14a}
Fournier-Viger, P., Gomariz, A., Gueniche, T., Soltani, A., Wu, C., Tseng, V.:
  {SPMF: A Java Open-Source Pattern Mining Library}. J. of Machine Learning
  Resea.  15,  3389--3393 (2014),
  \url{http://jmlr.org/papers/v15/fournierviger14a.html}

\bibitem{DBLP:conf/icdm/JiBD05}
Ji, X., Bailey, J., Dong, G.: Mining minimal distinguishing subsequence
  patterns with gap constraints. In: {(ICDM'05}. pp. 194--201 (2005)

\bibitem{DBLP:conf/cp/KemmarLLBC15}
Kemmar, A., Loudni, S., Lebbah, Y., Boizumault, P., Charnois, T.:
  {PREFIX-PROJECTION} global constraint for sequential pattern mining. In:
  Principles and Practice of Constraint Programming - 21st International
  Conference, {CP} 2015, Cork, Ireland, August 31 - September 4, 2015,
  Proceedings. pp. 226--243 (2015)

\bibitem{DBLP:conf/ictai/KemmarULCLBC14}
Kemmar, A., Ugarte, W., Loudni, S., Charnois, T., Lebbah, Y., Boizumault, P.,
  Cr{\'{e}}milleux, B.: Mining relevant sequence patterns with cp-based
  framework. In: {ICTAI}. pp. 552--559 (2014)

\bibitem{ChunLi2008}
Li, C., Wang, J.: Efficiently mining closed subsequences with gap constraints.
  In: {SIAM 2008} on Data Mining. pp. 313--322 (2008)

\bibitem{Li2012}
Li, C., Yang, Q., Wang, J., Li, M.: Efficient mining of gap-constrained
  subsequences and its various applications. Trans. Knowl. Discov. Data  6(1),
  2:1--2:39 (Mar 2012)

\bibitem{metivierLML13}
M\'etivier, J.P., Loudni, S., Charnois, T.: A constraint programming approach
  for mining sequential patterns in a sequence database. In: ECML/PKDD Workshop
  on Languages for Data Mining and Machine Learning (2013)

\bibitem{NegrevergneCPIAOR15}
N{\'{e}}grevergne, B., Guns, T.: Constraint-based sequence mining using
  constraint programming. In: {CPAIOR'15}. pp. 288--305 (2015)

\bibitem{DBLP:conf/icde/PeiHPCDH01}
Pei, J., Han, J., Mortazavi-Asl, B., Pinto, H., Chen, Q., Dayal, U., Hsu, M.:
  {P}refix{S}pan: Mining sequential patterns by prefix-projected growth. In:
  ICDE. pp. 215--224. IEEE Computer Society (2001)

\bibitem{DBLP:journals/cbm/WuZHA13}
Wu, X., Zhu, X., He, Y., Arslan, A.N.: {PMBC:} pattern mining from biological
  sequences with wildcard constraints. Comp. in Bio. and Med.  43(5),  481--492
  (2013)

\bibitem{Guizhen-2006}
Yang, G.: Computational aspects of mining maximal frequent patterns. Theor.
  Comput. Sci.  362(1-3),  63--85 (2006)

\bibitem{DBLP:conf/cikm/Zaki00}
Zaki, M.J.: Sequence mining in categorical domains: Incorporating constraints.
  In: {CIKM'00}. pp. 422--429 (2000)

\bibitem{DBLP:journals/ml/Zaki01}
Zaki, M.J.: {SPADE}: An efficient algorithm for mining frequent sequences.
  Machine Learning  42(1/2),  31--60 (2001)

\bibitem{DBLP:journals/tkdd/ZhangKCY07}
Zhang, M., Kao, B., Cheung, D.W., Yip, K.Y.: Mining periodic patterns with gap
  requirement from sequences. {TKDD}  1(2) (2007)

\end{thebibliography}

\end{document}